\def\eqref#1{equation~\ref{#1}}
\def\1{\bm{1}}
\DeclareMathAlphabet{\mathsfit}{\encodingdefault}{\sfdefault}{m}{sl}
\SetMathAlphabet{\mathsfit}{bold}{\encodingdefault}{\sfdefault}{bx}{n}
\newcommand{\E}{\mathbb{E}}
\DeclareMathOperator*{\argmax}{arg\,max}
\DeclareMathOperator*{\argmin}{arg\,min}
\newcommand{\statespace}{\mathcal{S}}
\newcommand{\actionspace}{\mathcal{A}}
\newcommand{\latentspace}{\mathcal{Z}}
\newcommand{\transprob}{p}
\newcommand{\reward}{r}
\DeclareMathOperator*{\expect}{\mathbb{E}}
\DeclareMathOperator*{\variance}{\mathbf{Var}}
\newtheorem{theorem}{Theorem}
\newtheorem*{theorem*}{Theorem}
\title{Skill or Luck? Return Decomposition via Advantage Functions}
\author{Hsiao-Ru Pan \& Bernhard Sch\"okopf \\
Max Planck Institute for Intelligent Systems, T\"ubingen, Germany \\
}
\begin{document}

\maketitle

\begin{abstract}
    Learning from off-policy data is essential for sample-efficient reinforcement learning.
    In the present work, we build on the insight that the advantage function can be understood as the causal effect of an action on the return, and show that this allows us to decompose the return of a trajectory into parts caused by the agent's actions (skill) and parts outside of the agent's control (luck).
    Furthermore, this decomposition enables us to naturally extend Direct Advantage Estimation (DAE) to off-policy settings (Off-policy DAE).
    The resulting method can learn from off-policy trajectories without relying on importance sampling techniques or truncating off-policy actions.
    We draw connections between Off-policy DAE and previous methods to demonstrate how it can speed up learning and when the proposed off-policy corrections are important.
    Finally, we use the MinAtar environments to illustrate how ignoring off-policy corrections can lead to suboptimal policy optimization performance.
\end{abstract}

\section{Introduction}\label{sec:intro}

Imagine the following scenario:
One day, A and B both decide to purchase a lottery ticket, hoping to win the grand prize.
Each of them chose their favorite set of numbers, but only A got \emph{lucky} and won the million-dollar prize.
In this story, we are likely to say that A got \emph{lucky} because, while A's action (picking a set of numbers) led to the reward, the expected rewards are the same for both A and B (assuming the lottery is fair), and A was ultimately rewarded due to something outside of their control.

This shows that, in a decision-making problem, the return is not always determined solely by the actions of the agent, but also by the randomness of the environment.
Therefore, for an agent to correctly distribute credit among its actions, it is crucial that the agent is able to reason about the effect of its actions on the rewards and disentangle it from factors outside its control.
This is also known as the problem of credit assignment \citep{minsky1961steps}.
While attributing \emph{luck} to the drawing process in the lottery example may be easy, it becomes much more complex in sequential settings, where multiple actions are involved and rewards are delayed.

The key observation of the present work is that we can treat the randomness of the environment as actions from an imaginary agent, whose actions determine the future of the decision-making agent.
Combining this with the idea that the advantage function can be understood as the causal effect of an action on the return \citep{pan2022direct}, we show that the return can be decomposed into parts caused by the agent (skill) and parts that are outside the agent's control (luck).
Furthermore, we show that this decomposition admits a natural way to extend Direct Advantage Estimation (DAE), an on-policy multi-step learning method, to off-policy settings (Off-policy DAE).
The resulting method makes minimal assumptions about the behavior policy and shows strong empirical performance.

Our contributions can be summarized as follows:
\begin{itemize}
    \item We generalize DAE to off-policy settings.
    \item We demonstrate that (Off-policy) DAE can be seen as generalizations of Monte-Carlo (MC) methods that utilize sample trajectories more efficiently.
    \item We verify empirically the importance of the proposed off-policy corrections through experiments in both deterministic and stochastic environments.
\end{itemize}

\section{Background}
In this work, we consider a discounted Markov Decision Process $(\statespace, \actionspace, \transprob, \reward, \gamma)$ with finite state space $\statespace$, finite action space $\actionspace$, transition probability $p(s'|s, a)$, expected reward function $r: \statespace \times \actionspace \rightarrow \mathbb{R}$, and discount factor $\gamma \in [0, 1)$.
A policy is a function $\pi:\statespace\rightarrow\Delta(\actionspace)$ which maps states to distributions over the action space.
The goal of reinforcement learning (RL) is to find a policy that maximizes the expected return, $\pi^* = \argmax_\pi \expect_\pi[G]$, where $G=\sum_{t=0}^\infty \gamma^tr_t$ and $r_t=r(s_t,a_t)$.
The value function of a state is defined by $V^\pi(s)=\expect_\pi[G|s_0{=}s]$, the $Q$-function of a state-action pair is similarly defined by $Q^\pi(s,a)=\expect_\pi[G|s_0{=}s, a_0{=}a]$ \citep{sutton1998introduction}.
These functions quantify the expected return of a given state or state-action pair by following a given policy $\pi$, and are useful for policy improvements.
They are typically unknown and are learned via interactions with the environment.

\paragraph{Direct Advantage Estimation}
The advantage function, defined by $A^\pi(s,a)=Q^\pi(s,a) - V^\pi(s)$, is another function that is useful to policy optimization.
Recently, \cite{pan2022direct} showed that the advantage function can be understood as the causal effect of an action on the return, and is more stable under policy variations (under mild assumptions) compared to the $Q$-function. 
They argued that it might be an easier target to learn when used with function approximation, and proposed Direct Advantage Estimation (DAE), which estimates the advantage function directly by
\begin{equation}\label{eqn:dae}
    A^\pi=\argmin_{\hat{A}\in F_\pi}  \expect_\pi \left[\left(\sum_{t=0}^\infty\gamma^t(r_{t} - \hat{A}_{t})\right)^2\right],\quad F_\pi=\left\{f\left|\sum_{a\in\actionspace}f(s,a)\pi(a|s)=0\right.\right\}
\end{equation}
where $\hat{A}_t = \hat{A}(s_t, a_t)$.
The method can also be seamlessly combined with a bootstrapping target to perform multi-step learning by iteratively minimizing the constrained squared error

\begin{equation}
\begin{aligned}\label{eqn:loss_bs}
    L(\hat{A},\hat{V}) = \expect_\pi \left[\left(\sum_{t=0}^{n-1}\gamma^t(r_{t} - \hat{A}_{t}) + \gamma^nV_\mathrm{target}(s_{n})-\hat{V}(s_0)\right)^2\right]\quad\text{subject to}\; \hat{A}\in F_\pi,
\end{aligned}    
\end{equation}
where $V_\text{target}$ is the bootstrapping target, and $(\hat{V}, \hat{A})$ are estimates of the value function and the advantage function.
Policy optimization results were reported to improve upon generalized advantage estimation \citep{schulman2015high}, a strong baseline for on-policy methods.
One major drawback of DAE, however, is that it can only estimate the advantage function for on-policy data (note that the expectation and the constraints share the same policy).
This limits the range of applications of DAE to on-policy scenarios, which tend to be less sample efficient.

\paragraph{Multi-step learning}
In RL, we often update estimates of the value functions based on previous estimates (e.g., TD(0), SARSA \citep{sutton1998introduction}).
These methods, however, can suffer from excessive bias when the previous estimates differ significantly from the true value functions, and it was shown that such bias can greatly impact the performance when used with function approximators \citep{schulman2015high}.
One remedy is to extend the backup length, that is, instead of using one-step targets such as $r(s_0,a_0) + \gamma Q_\text{target}(s_1, a_1)$ ($Q_\text{target}$ being our previous estimate), we include more rewards along the trajectory, i.e., $r(s_0,a_0) + \gamma r(s_1, a_1) + \gamma^2 r(s_2, a_2) + ... + \gamma^n Q_\text{target}(s_n, a_n)$.
This way, we can diminish the impact of $Q_\text{target}$ by the discount factor $\gamma^n$.
However, using the rewards along the trajectory relies on the assumption that the samples are on-policy (i.e., the behavior policy is the same as the target policy).
To extend such methods to off-policy settings often requires techniques such as importance sampling \citep{munos2016safe, rowland2020adaptive} or truncating (diminishing) off-policy actions \citep{precup2000elig,watkins1989learning}, which can suffer from high variance or low data utilization with long backup lengths.
Surprisingly, empirical results have shown that ignoring off-policy corrections can still lead to substantial speed-ups and is widely adapted in modern deep RL algorithms \citep{hernandez2019understanding, hessel2018rainbow, gruslys2017reactor}.

\section{Return Decomposition}
From the lottery example in Section~\ref{sec:intro}, we observe that, stochasticity of the return can come from two sources, namely, (1) the stochastic policy employed by the agent (picking numbers), and (2) the stochastic transitions of the environment (lottery drawing).
To separate their effect, we begin by studying deterministic environments where the only source of stochasticity comes from the agent's policy.
Afterward, we demonstrate why DAE fails when transitions are stochastic, and introduce a simple fix which generalizes DAE to off-policy settings.

\subsection{The Deterministic Case}

First, for deterministic environments, we have $s_{t+1} = h(s_t, a_t)$, where the transition probability is replaced by a deterministic transition function $h:\statespace\times\actionspace\rightarrow\statespace$.
As a consequence, the $Q$-function becomes $Q^\pi(s_t,a_t)=r(s_t,a_t) + \gamma V^\pi(s_{t+1})$, and the advantage function becomes $A^\pi(s_t, a_t)=r(s_t, a_t) + \gamma V^\pi(s_{t+1}) - V^\pi(s_t)$.
Let's start by examining the sum of the advantage function along a given trajectory $(s_0, a_0, s_1, a_1,...)$ with return $G$,
\begin{align}\label{eqn:telescope}
    \sum_{t=0}^\infty \gamma^tA^\pi(s_t, a_t) = \sum_{t=0}^\infty \gamma^t r(s_t,a_t) + 
    \underbrace{
    \sum_{t=0}^\infty \gamma^t\left(\gamma V^\pi(s_{t+1}) - V^\pi(s_t)\right)
    }_{\text{telescoping series}} = G - V^\pi(s_0),
\end{align}
or, with a simple rearrangement, $G=V^\pi(s_0) + \sum_{t=0}^\infty \gamma A^\pi(s_t, a_t)$.
One intuitive interpretation of this equation is: The return of a trajectory is equal to the average return ($V^\pi$) plus the variations caused by the actions along the trajectory ($A^\pi$).
Since Equation~\ref{eqn:telescope} holds for \emph{any} trajectory, the following equation holds for \emph{any} policy $\mu$
\begin{align}
    \expect_{\mu} \left[\left(G-\sum_{t=0}^{\infty}\gamma^t A^\pi_{t} -V^\pi(s_0)\right)^2\right] = 0.
\end{align}
This means that $(V^\pi, A^\pi)$ is a solution to the off-policy variant of DAE
\begin{equation}
\begin{aligned}\label{eqn:loss_off_det}
    L(\hat{A},\hat{V}) &= \expect_{\color{red} \mu} \left[\left(\sum_{t=0}^{\infty}\gamma^t (r_{t} - \hat{A}_{t}) -\hat{V}(s_0)\right)^2\right]\;\;
    \text{s.t.} \sum_{a\in \actionspace}{\color{blue}\pi(a|s)}\hat{A}(s,a)=0\;\forall s\in \statespace,
\end{aligned}    
\end{equation}
where the expectation is now taken with respect to an arbitrary behavior policy $\mu$ instead of the target policy $\pi$ in the constraint (Equation~\ref{eqn:loss_bs}, with $n\rightarrow\infty$).
We emphasize that this is a very general result, as we made no assumptions on the behavior policy $\mu$, and only sample trajectories from $\mu$ are required to compute the squared error.
However, two questions remain:
(1) Is the solution unique?
(2) Does this hold for stochastic environments?
We shall answer these questions in the next section.

\subsection{The Stochastic Case}

The major difficulty in applying the above argument to stochastic environments is that the telescoping sum (Equation~\ref{eqn:telescope}) no longer holds because $A^\pi(s_t,a_t)=r(s_t, a_t) + \gamma \E_{s'\sim p(\cdot|s_t, a_t)}[V^\pi(s')|s_t, a_t] - V^\pi(s_t)\neq r(s_t, a_t) + \gamma V^\pi(s_{t+1}) - V^\pi(s_t)$ and the sum of the advantage function becomes
\begin{align}\label{eqn:intro_b}
    \sum_{t=0}^\infty \gamma^t A^\pi(s_t, a_t) 
    &= \sum_{t=0}^\infty \gamma^t\left(r(s_t,a_t) +\gamma\E_{s'\sim p(\cdot|s_t, a_t)}\left[V^\pi(s')|s_t, a_t\right] - V^\pi(s_t)\right) \\
    &= G - \sum_{t=0}^\infty \gamma^{t+1}B^\pi_t - V^\pi(s_0),
\end{align}
where $B^\pi_t=B^\pi(s_t,a_t,s_{t+1})=V^\pi(s_{t+1}) - \E_{s'\sim p(\cdot|s_t, a_t)}\left[V^\pi(s')|s_t, a_t\right]$.
This shows that $V^\pi$ and $A^\pi$ are not enough to fully characterize the return $G$ (compared to Equation~\ref{eqn:telescope}), and $B^\pi$ is required.
But what exactly is $B^\pi$?
To understand the meaning of $B^\pi$, we begin by dissecting state transitions into a two-step process, see Figure~\ref{fig:process}.
In this view, we introduce an imaginary agent \emph{nature}, also interacting with the environment, whose actions determine 
the next states of the decision-making agent.
In this setting, nature follows a stationary policy $\Bar{\pi}$ equal to the transition probability, i.e., $\Bar{\pi}(s'|(s,a))=p(s'|s,a)$.
Since $\Bar{\pi}$ is fixed, we omit it in the following discussion.
The question we are interested in is, how much do nature's actions affect the return?
We note that, while there are no immediate rewards associated with nature's actions
, they can still influence future rewards by choosing whether we transition into high-rewarding states or otherwise.
Since the advantage function was shown to characterize the causal effect of actions on the return, we now examine nature's advantage function.

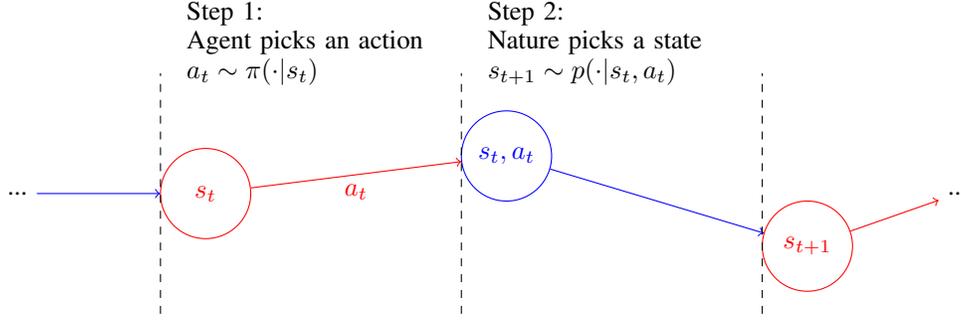
\begin{figure}[t]
    \centering
    \begin{tikzpicture}
        \node (history) at (0.0, 0.0) {...};
        \node (future) at (12.5, 0.0) {...};
        \node[circle, draw, minimum size=1.2cm, color=red] (s_t) at (2.5, 0.0) {$s_t$};
        \node[circle, draw, minimum size=1.2cm, color=blue] (sa_t) at (6.5, 0.5) {$s_t, a_t$};
        \node[circle, draw, minimum size=1.2cm, color=red] (s_t1) at (10.5, -0.7) {$s_{t+1}$};
        \draw[->, draw, color=blue] (history) -- (s_t);
        \draw[->, draw, color=red] (s_t) -- (sa_t) node[below, color=red, pos=0.5] {$a_t$};
        \draw[->, draw, color=blue] (sa_t) -- (s_t1);
        \draw[->, draw, color=red] (s_t1) -- (future);
        \draw[draw, dashed] (1.9, -1.6) -- (1.9, 1.6);
        \draw[draw, dashed] (5.9, -1.6) -- (5.9, 1.6);
        \draw[draw, dashed] (9.9, -1.6) -- (9.9, 1.6);
        \node[text width=3.5cm] at (4, 2) {Step 1:\\ Agent picks an action $a_t\sim\pi(\cdot|s_t)$};
        \node[text width=3.5cm] at (8, 2) {Step 2:\\ Nature picks a state $s_{t+1}\sim p(\cdot|s_t, a_t)$};
    \end{tikzpicture}
    \caption{A two-step view of the state transition process. First, we introduce an imaginary agent \emph{nature}, which controls the stochastic part of the transition process.
    In this view, nature lives in a world with state space $\Bar{\statespace}=\statespace\times\actionspace$ and action space $\Bar{\actionspace}=\statespace$.
    At each time step $t$, the agent chooses its action $a_t$ based on $s_t$, and, instead of transitioning directly into the next state, it transitions into an intermediate state denoted $(s_t,a_t)\in\Bar{\statespace}$, where nature chooses the next state $s_{t+1}\in\Bar{\actionspace}$ based on $(s_t, a_t)$.
    We use nodes and arrows to represent states and actions by the agent (red) and nature (blue).}
    \label{fig:process}
\end{figure}
By definition, the advantage function is equal to $Q^\pi(s,a)-V^\pi(s)$.
We first compute both $\Bar{Q}^\pi$ and $\Bar{V}^\pi$ from nature's point of view (we use the bar notation to differentiate between nature's view and the agent's view).
Since $\Bar{\statespace}=\statespace\times\actionspace$ and $\Bar{\actionspace}=\statespace$, $\Bar{V}$ is now a function of $\Bar{\statespace}=\statespace\times\actionspace$, and $\Bar{Q}$ is a function of $\Bar{\statespace}\times\Bar{\actionspace}=\statespace\times\actionspace\times\statespace$, taking the form
\begin{align}
    \Bar{V}^\pi(s,a)&
    =\expect_\pi[\sum_{t>0}\gamma^t r_t|s_0{=}s, a_0{=}a]=\E_{s'\sim p(\cdot|s_0, a_0)}[V^\pi(s')|s_0{=}s,a_0{=}a],\\
    \Bar{Q}^\pi(s,a, s')&
    =\expect_\pi[\sum_{t>0}\gamma^t r_t|s_0{=}s, a_0{=}a, s_1{=}s']=V^\pi(s').
\end{align}
We thus have $\Bar{A}^\pi(s, a, s') = \Bar{Q}^\pi(s,a,s')-\Bar{V}^\pi(s,a)=V^\pi(s')-\E_{s'\sim p(\cdot|s, a)}[V^\pi(s')|s,a]$, which is exactly $B^\pi(s,a,s')$ as introduced at the beginning of this section.
Now, if we rearrange Equation~\ref{eqn:intro_b} into
\begin{align}\label{eqn:intro_b_rearr}
    V^\pi(s_0)+\sum_{t=0}^\infty \gamma^t\left(A^\pi(s_t, a_t) + \gamma B^\pi(s_t, a_t, s_{t+1})\right) = G,
\end{align}
then an intuitive interpretation emerges, which reads: \emph{The return of a trajectory can be decomposed into the average return $V^\pi(s_0)$, the causal effect of the agent's actions $A^\pi(s_t, a_t)$ ({\bfseries skill}), and the causal effect of nature's actions $B^\pi(s_t, a_t, s_{t+1})$ ({\bfseries luck})}.

Equation~\ref{eqn:intro_b_rearr} has several interesting applications.
For example, the policy improvement lemma \citep{kakade2002approximately}, which relates value functions of different policies by $V^\mu(s)=V^\pi(s)+\E_\mu[\sum_{t\geq0}\gamma^tA^\pi_t|s_0=s]$, is an immediate consequence of taking the conditional expectation $\E_\mu[\cdot|s_0{=}s]$ of Equation~\ref{eqn:intro_b_rearr}.
More importantly, this equation admits a natural generalization of DAE to off-policy settings:
\begin{theorem}[Off-policy DAE]\label{thm:main}
Given a behavior policy $\mu$, a target policy $\pi$, and backup length $n\geq 0$.
Let $\hat{A}_{t}=\hat{A}(s_{t}, a_{t})$, $\hat{B}_{t}=\hat{B}(s_{t}, a_{t}, s_{t+1})$, and the objective function
\begin{align}\label{eqn:loss_full}
\begin{aligned}
    L(\hat{A}, \hat{B}, \hat{V}) &= \expect_\mu \left[\left(\sum_{t= 0}^n\gamma^{t}\left(r_{t}-\hat{A}_{t}-\gamma \hat{B}_{t}\right)+\gamma^{n+1}\hat{V}(s_{n+1})-\hat{V}(s_0)\right)^2\right]\\
    &\text{subject to}
    \begin{cases}
        \sum_{a\in\actionspace}\hat{A}(s,a)\pi(a|s)=0\quad\forall s\in\statespace \\
        \sum_{s'\in\statespace}\hat{B}(s,a, s')p(s'|s, a)=0\quad \forall (s,a)\in \statespace\times\actionspace
    \end{cases},
\end{aligned}
\end{align}
then $(A^\pi, B^\pi, V^\pi)$ is a minimizer of the above problem. Furthermore, the minimizer is unique if $\mu$ is sufficiently explorative (i.e., non-zero probability of reaching all possible transitions $(s,a,s')$).
\end{theorem}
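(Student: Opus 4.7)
The plan would split into two parts: first, I would show that $(A^\pi, B^\pi, V^\pi)$ achieves $L = 0$ and is therefore a minimizer (since $L \ge 0$); second, I would show that under the exploration hypothesis no other feasible triple can match this.

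For the existence part, my starting point is the pointwise identity $A^\pi(s_t, a_t) + \gamma B^\pi(s_t, a_t, s_{t+1}) = r(s_t, a_t) + \gamma V^\pi(s_{t+1}) - V^\pi(s_t)$, which holds along every trajectory because the $\E_{s'}[V^\pi(s')|s_t,a_t]$ contributions appearing in $A^\pi$ and in $\gamma B^\pi$ cancel exactly. Multiplying by $\gamma^t$, summing from $0$ to $n$, and telescoping the $V^\pi$ terms would collapse the bracketed integrand of Equation~\ref{eqn:loss_full} to zero along every realization, so the loss vanishes under the expectation. The constraints are immediate: $\sum_a \pi(a|s) A^\pi(s,a) = 0$ by the definition of the advantage, and $\sum_{s'} p(s'|s,a) B^\pi(s,a,s') = 0$ by the definition of $B^\pi$.

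For uniqueness, I would let $(\hat A, \hat B, \hat V)$ be an arbitrary feasible minimizer, define the differences $\delta_A = \hat A - A^\pi$, $\delta_B = \hat B - B^\pi$, $\delta_V = \hat V - V^\pi$, and observe that they inherit the linear constraints $\sum_a \pi(a|s)\delta_A(s,a) = 0$ and $\sum_{s'} p(s'|s,a)\delta_B(s,a,s') = 0$. Since the squared integrand is zero for both triples, the difference of integrands vanishes $\mu$-almost surely, giving
\[
\delta_V(s_0) = \gamma^{n+1}\delta_V(s_{n+1}) - \sum_{t=0}^n \gamma^t\bigl(\delta_A(s_t, a_t) + \gamma \delta_B(s_t, a_t, s_{t+1})\bigr)
\]
along every trajectory with positive $\mu$-probability, which by the exploration hypothesis includes every sequence $(s_0, a_0, \ldots, s_{n+1})$ with $p(s_{t+1}|s_t,a_t) > 0$ for all $t$.

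The rest of the argument is a cascade of conditional expectations under the target dynamics. Taking $\E_{\pi,p}[\cdot \mid s_0 = s]$ annihilates every $\delta_A$ term (by the $\pi$-constraint) and every $\delta_B$ term (by the $p$-constraint), collapsing the identity to $\delta_V(s) = \gamma^{n+1}\E_{\pi,p}[\delta_V(s_{n+1}) \mid s_0 = s]$; iterating this $k$ times and using boundedness of $\delta_V$ on the finite state space together with $\gamma < 1$ would force $\delta_V \equiv 0$. Next, conditioning on $(s_0, a_0)$ and using $\delta_V \equiv 0$ isolates $\delta_A(s_0, a_0) = 0$, and finally conditioning on $(s_0, a_0, s_1)$ isolates $\gamma\delta_B(s_0, a_0, s_1) = 0$, giving $\delta_B \equiv 0$ (for $\gamma > 0$; the case $\gamma = 0$ is degenerate as $B$ drops out of the loss entirely). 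The step I expect to be the main obstacle is the promotion from $\mu$-almost-sure vanishing to pointwise vanishing on every $p$-feasible trajectory—this is where the \emph{sufficiently explorative} hypothesis is essential, since without it the triple $(\hat A, \hat B, \hat V)$ would only be pinned down on the support of the behavior distribution and could disagree with $(A^\pi, B^\pi, V^\pi)$ off-support.
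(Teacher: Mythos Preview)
Your proposal is correct and follows essentially the same route as the paper's proof: show the integrand vanishes pointwise for $(A^\pi,B^\pi,V^\pi)$ via the telescoping identity, then for any other minimizer use the exploration hypothesis to get pointwise vanishing of the integrand and peel off $V$, $A$, $B$ in turn by successive conditional expectations under $(\pi,p)$ exploiting the two centering constraints. Your choice to work with the differences $\delta_V,\delta_A,\delta_B$ (so the reward terms cancel) and to spell out the contraction $\lvert\delta_V(s)\rvert\le\gamma^{k(n+1)}\lVert\delta_V\rVert_\infty\to0$ is a cosmetic but slightly cleaner variant of the paper's one-line appeal to ``$V'$ satisfies the Bellman equation,'' and your remark on the degenerate $\gamma=0$ case is a detail the paper omits.
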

See Appendix~\ref{app:proof} for a proof.
In practice, we can minimize the empirical variant of Equation~\ref{eqn:loss_full} from samples to estimate $(V^\pi, A^\pi, B^\pi)$, which renders this an off-policy multi-step method.
We highlight two major differences between this method and other off-policy multi-step methods.
(1) Minimal assumptions on the behavior policy are made, and no knowledge of the behavior policy is required during training (in contrast to importance sampling methods).
(2) It makes use of the full trajectory instead of truncating or diminishing future steps when off-policy actions are encountered \citep{watkins1989learning, precup2000elig}.
We note, however, that applying this method in practice can be non-trivial due to the constraint $\sum_{s'\in\statespace}\hat{B}(s,a, s')p(s'|s, a)=0$.
This constraint is equivalent to the $\hat{A}$ constraint in DAE, in the sense that they both ensure the functions satisfy the centering property of the advantage function (i.e., $\expect_{a\sim\pi}[A^\pi(s,a)|s]=0$).
Below, we briefly discuss how to deal with this.

\paragraph{Approximating the constraint}
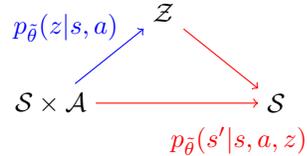
\begin{wrapfigure}{r}{.35\textwidth}
    \centering
    \begin{tikzpicture}
        \node (sa) at (0.0, 0.) {$\statespace\times\actionspace$};
        \node (s') at (3, 0.) {$\statespace$};
        \node (z) at (1.5, 1.2) {$\latentspace$};
        \draw[->, draw, red] (sa) --  (s');
        \draw[->, draw, blue] (sa) -- (z);
        \draw[->, draw, red] (z) -- (s');
        \node (prior) at (0.2, 1) {\color{blue} $p_{\Tilde{\theta}}(z|s,a)$};
        \node (ll) at (2.5, -0.5) {\color{red} $p_{\Tilde{\theta}}(s'|s,a,z)$};
    \end{tikzpicture}
    \caption{Latent variable model of transitions; $\latentspace$ is a discrete latent space, which can be understood as actions from nature.
    }
    \label{fig:latent}
\end{wrapfigure}
As a first step, we note that a similar constraint $\sum_{a\in\actionspace}\hat{A}(s,a)\pi(a|s)=0$ can be enforced through the following parametrization $\hat{A}_\theta(s,a) = f_\theta(s,a) - \sum_{a\in\actionspace}f_\theta(s,a)\pi(a|s)$, where $f_\theta$ is the underlying unconstrained function approximator \citep{wang2016dueling}.
Unfortunately, this technique cannot be applied directly to the $\hat{B}$ constraint, because (1) it requires a sum over the state space, which is typically too large, and (2) the transition function $p(s'|s,a)$ is usually unknown.

To overcome these difficulties, we use a Conditional Variational Auto-Encoder (CVAE) \citep{kingma2013auto, sohn2015learning} to encode transitions into a discrete latent space $\latentspace$ such that the sum can be efficiently approximated, see Figure~\ref{fig:latent}.
The CVAE consists of three components: (1) an approximated conditional posterior $q_{\Tilde{\phi}}(z|s,a,s')$ (encoder), (2) a conditional likelihood $p_{\Tilde{\theta}}(s'|s,a,z)$ (decoder), and (3) a conditional prior $p_{\Tilde{\theta}}(z|s,a)$.
These components can then be learned jointly by maximizing the conditional evidence lower bound (ELBO),
\begin{align}\label{eqn:loss_cvae}
    \text{ELBO} = -D_\text{KL}(q_{\Tilde{\phi}}(z|s,a,s')||p_{\Tilde{\theta}}(z|s,a)) 
    + \expect_{z\sim q_{\Tilde{\phi}}(\cdot|s,a,s')}[\log p_{\Tilde{\theta}}(s'|s,a,z)].
\end{align}
Once a CVAE is learned, we can construct $\hat{B}(s,a,s')$ from an unconstrained function $g_\theta(s,a,z)$ by $B(s,a,s') = \expect_{z\sim q_{\Tilde{\phi}}(\cdot|s,a,s')}[g_\theta(s,a,z)|s,a,s'] - \expect_{z\sim p_{\Tilde{\theta}}(\cdot|s,a)}[g_\theta(s,a,z)|s,a]$, which has the property
that $\sum_{s'}p(s'|s,a)B(s,a,s')\approx0$ because $q_{\Tilde{\phi}}(z|s,a,s')\approx p_{\Tilde{\theta}}(z|s,a,s')$.

\section{Relationship to other methods}
In this section, we first demonstrate that (Off-policy) DAE can be understood as a generalization of MC methods with better utilization of trajectories.
Secondly, we show that the widely used uncorrected estimator can be seen as a special case of Off-policy DAE and shed light on when it might work.

\subsection{Monte-Carlo Methods}\label{sec:regression}
To understand how DAE can speed up learning, let us first revisit Monte-Carlo (MC) methods through the lens of regression.
In a typical linear regression problem, we are given a dataset $\{(x_i, y_i)\in \mathbb{R}^n \times\mathbb{R}\}$, and tasked to find coefficients $w\in \mathbb{R}^n$ minimizing the error $\sum_i\left(w\cdot x_i - y_i\right)^2$.
In RL, the dataset often consists of transitions or sequences of transitions (as in multi-step methods) and their returns, that is, $(\tau_i, G_i)$ where $\tau_i$ has the form $(s_0, a_0, s_1, a_1, ...)$ and $G_i$ is the return associated with $\tau_i$.
However, $\tau$ may be an abstract object which cannot be used directly for regression, and we must first map $\tau$ to a feature vector $\phi(\tau)\in\mathbb{R}^n$.\footnote{This is not to be confused with the features of states, which are commonly used to approximate value functions.}
For example, in MC methods, we can estimate the value of a state by rolling out trajectories using the target policy starting from the given state and averaging the corresponding returns, i.e., $\expect[\sum_{t\geq 0}\gamma^tr_t|s_0=s]\approx\sum_{i=1}^k G_i/k$.
This is equivalent to a linear regression problem, where we first map trajectories to a vector by $\phi_s(\tau) = \mathbb{I}(s_0=s)$ (vector of length $|\statespace|$ with elements 1 if the starting state is $s$ or 0 otherwise), and minimize the squared error
\begin{equation}
    L(\mathbf{v})=\sum_{i=1}^k\left[\left(\sum_sv_s\phi_s(\tau_i)-G_i\right)^2\right],
\end{equation}
where $\mathbf{v}$ is the vector of linear regression coefficients $v_s$.
Similarly, we can construct feature maps such as $\phi_{s,a}(\tau)=\mathbb{I}(s_0{=}s, a_0{=}a)$ and solve the regression problem to arrive at $Q^\pi(s,a)$.
This view shows that MC methods can be seen as linear regression problems with different feature maps.
Furthermore, it shows that MC methods utilize rather little information from given trajectories (only the starting state(-action)).
An interesting question is whether it is possible to construct features that include more information about the trajectory while retaining the usefulness of the coefficients.
Indeed, DAE (Equation~\ref{eqn:loss_bs}, with $n\rightarrow\infty$) can be seen as utilizing two different feature maps ($\phi_{s,a}(\tau) = \sum_{t=0}^\infty \gamma^t\mathbb{I}(s_t{=}s, a_t{=}a)$ and $\phi_{s}(\tau)=\mathbb{I}(s_0{=}s)$), which results in a vector of size $|\statespace|\times|\actionspace|$ that counts the multiplicity of each state-action pair in the trajectory and a vector of size $|\statespace|$ indicating the starting state.
This suggests that DAE can be understood as a generalization of MC methods by using more informative features.

To see how using more informative features can enhance MC methods, let us consider an example (see Figure~\ref{fig:classic}) adapted from \citet{szepesvari2010algorithms}.
This toy example demonstrates a major drawback of MC methods: it does not utilize the relationship between states $2$ and $3$, and therefore, an accurate estimate of $\hat{V}(3)$ does not improve the estimate of $\hat{V}(2)$.
TD methods, on the other hand, can utilize this relationship to achieve better estimates.
DAE, similar to TD methods, also utilizes the relationship between $\hat{V}(2)$ and $\hat{A}(3, \cdot)$ to achieve faster convergence on $\hat{V}(2)$.
In fact, in this case, DAE converges even faster than TD(0) as it can exploit the sampling policy to efficiently estimate $\hat{A}(3,\cdot)$, whereas TD(0) has to rely on sample means to estimate $\hat{V}(3)$.


\begin{figure}
    \centering
    \begin{tikzpicture}
    \node[circle, draw, minimum size=.7cm] (s1) at (0.0,.7) {$1$}; 
    \node at (-1.7, .7) {$p(s_0{=}1)=0.9$}; 
    \node[circle, draw, minimum size=.7cm] (s2) at (0.0,-.7) {$2$}; 
    \node at (-1.7, -.7) {$p(s_0{=}2)=0.1$}; 
    \node (ru) at (2.5, .7) {$r_\mathtt{u}{=}1$}; 
    \node (rd) at (2.5, -.7) {$r_\mathtt{d}{=}0$}; 
    \node[circle, draw, minimum size=.7cm] (s3) at (1.5,0) {$3$}; 
    \node[circle, draw, minimum size=.7cm] (s4) at (3.0,0) {$4$}; 
    \draw[->] (s1) -- node[above=.2cm]{$r{=}0$}(s3);
    \draw[->] (s2) -- node[below=.2cm]{$r{=}0$}(s3);
    \draw[->] (s3) to [out=30, in=150] (s4);
    \draw[->] (s3) to [out=-30, in=-150] (s4);
    \node[inner sep=0pt] (result) at (7.5,0)
    {\includegraphics[width=.5\linewidth]{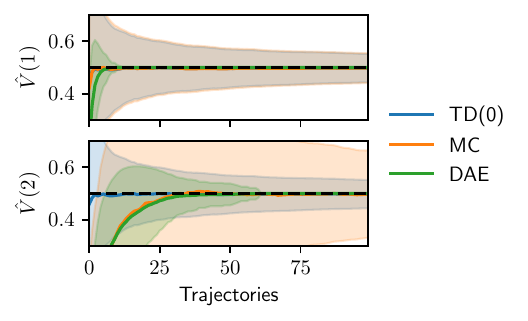}};
    \end{tikzpicture}
    \caption{Left: An MDP with $\statespace=\{1,2,3,4\}$.
    Both states 1 and 2 have only a single action with immediate rewards 0 that leads to state 3.
    State 3 has two actions, $\mathtt{u}$ and $\mathtt{d}$, that lead to the terminal state 4 with immediate rewards 1 and 0, respectively.
    Right: We compare the values estimated by Batch TD(0), MC, and DAE with trajectories sampled from the uniform policy. Lines and shadings represent the average and one standard deviation of the estimated values over 1000 random seeds. The dashed line represents the true value $V(1)=V(2)=0.5$. See Appendix~\ref{app:classic} for details.}
    \label{fig:classic}
\end{figure}

Similarly, we can compare DAE to Off-policy DAE, which further utilizes $\phi_{s,a,s'}(\tau)=\sum_{t=0}^\infty\gamma^t\mathbb{I}(s_t{=}s, a_t{=}a, s_{t+1}{=}s')$, in stochastic environments.
See Figure~\ref{fig:classic_stoch} for another example.
Here, we observe that both Off-policy DAE variants can outperform DAE even in the on-policy setting.
This is because Off-policy DAE can utilize $\hat{B}(4,\cdot,\cdot)$ across different trajectories to account for the variance caused by the stochastic transitions at state 4.

\begin{figure}
    \centering
    \begin{tikzpicture}
    \node[circle, draw, minimum size=.7cm] (s1) at (0.0,.7) {$1$}; 
    \node[circle, draw, minimum size=.7cm] (s2) at (0.0,-.7) {$2$}; 
    \node (ru) at (2.5, .7) {$r_\mathtt{u}{=}1$}; 
    \node (rd) at (2.5, -.7) {$r_\mathtt{u}{=}0$}; 
    \node[circle, draw, minimum size=.7cm] (s3) at (1.5,0) {$3$}; 
    \node[circle, draw, minimum size=.7cm] (s4) at (3.0,0) {$4$};
    \node[circle, draw, minimum size=.7cm] (s5) at (4.0,.7) {$5$}; 
    \node[circle, draw, minimum size=.7cm] (s6) at (4.0,-.7) {$6$}; 
    \node[circle, draw, minimum size=.7cm] (s7) at (5.3, 0) {$7$}; 
    \draw[->] (s1) -- node[above=.2cm]{$r{=}0$}(s3);
    \draw[->] (s2) -- node[below=.2cm]{$r{=}0$}(s3);
    \draw[->] (s3) to [out=30, in=150] (s4);
    \draw[->] (s3) to [out=-30, in=-150] (s4);
    \draw[-] (s4) to (3.5, 0);
    \draw[->] (3.5, 0) -- (s5);
    \draw[->] (3.5, 0) -- (s6);
    \draw[->] (s5) -- node[above=.2cm, pos=.6]{$r{=}1$} (s7);
    \draw[->] (s6) -- node[below=.2cm, pos=.6]{$r{=}0$} (s7);
    \node[inner sep=0pt] (result) at (9.7, 0)
    {\includegraphics[width=.55\linewidth]{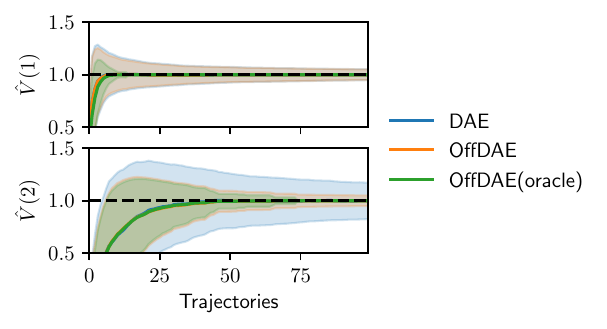}};
    \end{tikzpicture}
    \caption{Left: An MDP extended from Figure~\ref{fig:classic}. 
    Instead of terminating at state 4, the agent transitions randomly to state 5 or 6 with equal probabilities.
    Both states 5 and 6 have a single action, with rewards 1 and 0, respectively.
    State 7 is the terminal state.
    Right: We compare the values (with uniform policy) estimated by DAE, Off-policy DAE (learned transition probabilities), and Off-policy DAE (oracle, known transition probabilities). Lines and shadings represent the average and one standard deviation of the estimated values over 1000 random seeds. The dashed line represents the true value $V(1)=V(2)=1$. }
    \label{fig:classic_stoch}
\end{figure}

\subsection{The Uncorrected Method}\label{sec:uncorrected}

The uncorrected method (simply "Uncorrected" in the following) updates its value estimates using the multi-step target $\sum_{t=0}^n \gamma^tr_t+\gamma^{n+1}V_{\text{target}}(s_{n+1})$ without any off-policy correction.
\citet{hernandez2019understanding} showed that Uncorrected can achieve performance competitive with true off-policy methods in deep RL, although it was also noted that its performance may be problem-specific.
Here, we examine how Off-policy DAE, DAE, and Uncorrected relate to each other, and give a possible explanation for when Uncorrected can be used.

We first rewrite the objective of Off-policy DAE (Equation~\ref{eqn:loss_full}) into the following form:
\begin{align}\label{eqn:hierarchy}
    \Big(
        \hat{V}(s_0) - 
        \big(
            \underbrace{\underbrace{\sum_{t= 0}^n\gamma^{t}r_{t}
            +\gamma^{n+1}V_\text{target}(s_{n+1})}_{\text{Uncorrected}}
            -\sum_{t= 0}^n\gamma^{t}\hat{A}_{t}}_{\text{DAE}}
            -\sum_{t= 0}^n\gamma^{t+1} \hat{B}_{t}
        \big)
    \Big)^2,
\end{align}
where the underbraces indicate the updating targets of the respective method.
We can see now there is a clear hierarchy between these methods, where DAE is a special case of Off-policy DAE by assuming $\hat{B}\equiv 0$, and Uncorrected is a special case by assuming both $\hat{A}\equiv 0$ and $\hat{B}\equiv 0$.

The question is, then, when is $\hat{A}\equiv 0$ or $\hat{B}\equiv 0$ a good assumption?
Remember that, in deterministic environments, we have $B^\pi\equiv 0$ for any policy $\pi$; therefore, $\hat{B}\equiv 0$ is a correct estimate of $B^\pi$, meaning that DAE can be directly applied to off-policy data when the environment is deterministic.
Next, to see when $\hat{A}\equiv 0$ is useful, remember that the advantage function can be interpreted as the causal effect of an action on the return.
In other words, if actions in the environment tend to have minuscule impacts on the return, then Uncorrected can work with a carefully chosen backup length.
This can partially explain why Uncorrected worked in environments like Atari games \citep{bellemare13arcade, gruslys2017reactor, hessel2018rainbow} for small backup lengths, because the actions are fine-grained and have small impact ($A\approx 0$) in general.
In Appendix~\ref{app:counter}, we provide a concrete example demonstrating how ignoring the correction can lead to biased results.

\section{Experiments}\label{sec:exp}
We now compare (1) Uncorrected, (2) DAE, (3) Off-policy DAE, and (4) Tree Backup \citep{precup2000elig} in terms of policy optimization performance using a simple off-policy actor-critic algorithm.
By comparing (1), (2), and (3), we test the importance of $\hat{A}$ and $\hat{B}$ as discussed in Section~\ref{sec:uncorrected}.
Method (4) serves as a baseline of true off-policy method, and Tree Backup was chosen because, like Off-policy DAE, it also assumes no knowledge of the behavior policy, in contrast to importance sampling methods.
We compare these methods in a controlled setting, by only changing the critic objective with all other hyperparameters fixed.

\vspace{-5pt}
\paragraph{Environment}
We perform our experiments using the MinAtar suite \citep{young19minatar}.
The MinAtar suite consists of 5 environments that replicate the dynamics of a subset of environments from the Arcade Learning Environment (ALE) \citep{bellemare13arcade} with simplified state/action spaces.
The MinAtar environments have several properties that are desirable for our study:
(1) Actions tend to have significant consequences due to the coarse discretization of its state/action spaces.
This suggests that ignoring other actions' effects ($\hat{A}$), as done in Uncorrected, may have a larger impact on its performance.
(2) The MinAtar suite includes both deterministic and stochastic environments, which allows us to probe the importance of $\hat{B}$.

\begin{wrapfigure}{r}{0.48\textwidth}
\vspace{-20pt}
\begin{minipage}{0.48\textwidth}
\begin{algorithm}[H]
\caption{A Simple Actor-Critic Algorithm}
\label{alg:actorcritic_summary}
\begin{algorithmic}[1] 
\REQUIRE \texttt{backup}
\STATE Initialize $A_\theta$, $V_\theta$, $B_\theta$, $\pi_\theta$
\STATE Initialize CVAE $q_{\Tilde{\phi}}$, $p_{\Tilde{\theta}}$
\STATE Initialize $D\leftarrow \{\}$, $\theta_\text{EMA}\leftarrow\theta$
\FOR{$t=0, 1, 2, \dots$}
    \STATE Sample $(s, a, r, s')\sim\pi_\theta$
    \STATE $D\leftarrow D\cup\{(s,a,r,s')\}$
    \STATE Sample batch $\mathcal{B}$ trajectories from $D$
     \IF{\texttt{backup} is Off-policy DAE}
         \STATE Train CVAE (Eq~\ref{eqn:loss_cvae}) using $\mathcal{B}$
         \STATE Approximate $B_\theta(s,a,s')$     
    \ENDIF
    \STATE Compute $L_\text{critic}$ (Eq.~\ref{eqn:hierarchy})
    \STATE Compute $L_\text{actor}$\\$=-\expect_{a\sim\pi_\theta}[\hat{A}] + \beta_\text{KL}D_\text{KL}(\pi_\theta||\pi_{\theta_\text{EMA}})$ 
    \STATE Train $L_\text{critic} + L_\text{actor}$ using $\mathcal{B}$
    \STATE $\theta_\text{EMA}\leftarrow\tau\theta_\text{EMA} + (1-\tau)\theta$
\ENDFOR
\end{algorithmic}
\end{algorithm}
\end{minipage}
\vspace{-10pt}
\end{wrapfigure}
\paragraph{Agent Design}
We summarize the agent in Algorithm~\ref{alg:actorcritic_summary}.
Since (Off-policy) DAE's loss function depends heavily on the target policy, we found that having a smoothly changing target policy during training is critical, especially when the backup length is long.
Preliminary experiments indicated that using the greedy policy, i.e., $\argmax_{a\in\actionspace}\hat{A}(s,a)$, as the target policy can lead to divergence, which is likely due to the phenomenon of policy churning \citep{schaul2022phenomenon}.
To mitigate this, we distill a policy by maximizing $\expect_{a\sim\pi_\theta}[\hat{A}(s,a)]$, and smooth it with exponential moving average (EMA).
The smoothed policy $\pi_\text{EMA}$ is then used as the target policy.
Additionally, to avoid premature convergence, we include a KL-divergence penalty between $\pi_\theta$ and $\pi_\text{EMA}$, similar to trust-region methods \citep{schulman2015trust}.
For critic training, we also use an EMA of past value functions as the bootstrapping target.
For Off-policy DAE, we additionally learn a CVAE model of the environment.
Since learning the dynamics of the environment may improve sample efficiency by learning better representations \citep{gelada2019deepmdp, schwarzer2020data, hafner2020mastering}, we isolate this effect by training a separate network for the CVAE such that the agent can only query $p(z|s,a,s')$ and $p(z|s,a)$.
See Appendix~\ref{app:minatar_detail} for more details about the algorithm and hyperparameters.

\begin{figure}[t]
    \centering
    \includegraphics[width=.48\linewidth]{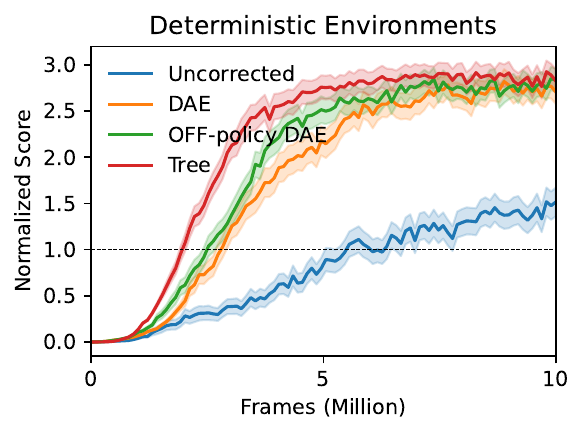}
    \includegraphics[width=.48\linewidth]{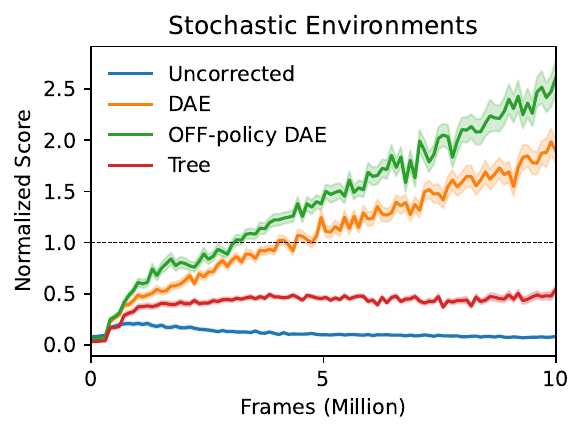}
    
    \caption{Normalized training curves aggregated over deterministic (left) and stochastic (right) environments. Scores were first normalized using the PPO-DAE baseline and then aggregated over 20 random seeds, environments, and backup lengths. Lines and shadings represent the means and 1 standard error of the means, respectively. 
    The dotted horizontal lines shows the PPO-DAE baseline.
    }
    \label{fig:results}
\end{figure}

\paragraph{Results}
Each agent is trained for 10 million frames, and evaluated by averaging the undiscounted scores of 100 episodes obtained by the trained policy.
For comparison, we use the scores reported by \citet{pan2022direct} as an on-policy baseline, which were trained using PPO and DAE (denoted PPO-DAE).
The results are summarized in Figure~\ref{fig:results}.
Additional results for individual environments and other ablation studies can be found in Appendix~\ref{app:minatar_detail}.
We make the following observations:
(1) For deterministic environments, both DAE variants performed similarly, demonstrating that $\hat{B}$ is irrelevant. 
Additionally, both DAE variants converged to similar scores as Tree backup, albeit slightly slower, suggesting that they can compete with true off-policy methods.
Uncorrected, on the other hand, performed significantly worse than DAE, suggesting that $\hat{A}$ is crucial in off-policy settings, as the two methods only differ in $\hat{A}$.
(2) For stochastic environments, we see a clear hierarchy between Uncorrected, DAE and Off-policy DAE, suggesting that both $\hat{A}$ and $\hat{B}$ corrections are important.
Notably, Tree backup performs significantly worse than both DAE variants in this case, while only being slightly better than Uncorrected.


\section{Related Work}
\paragraph{Advantage Function}
The advantage function was originally proposed by \citet{baird1994reinforcement} to address small time-step domains.
Later, it was shown that the advantage function can be used to relate value functions of different policies \citep{kakade2002approximately} or reduce the variance of policy gradient methods \citep{greensmith2004variance}.
These properties led to wide adoption of the advantage function in modern policy optimization methods \citep{schulman2015trust, schulman2015high, schulman2017proximal, mnih2016asynchronous}.
More recently, the connection between causal effects and the advantage function was pointed out by \citet{corcoll2020disentangling}, and further studied by \citet{pan2022direct}, who also proposed DAE.

\vspace{-5pt}
\paragraph{Multi-step Learning}
Multi-step methods \citep{watkins1989learning, sutton1988learning} have been widely adopted in recent deep RL research and shown to have a strong effect on performance \citep{schulman2015high, hessel2018rainbow, wang2016sample, gruslys2017reactor, espeholt2018impala, hernandez2019understanding}.
Typical off-policy multi-step methods include importance sampling \citep{munos2016safe, rowland2020adaptive, precup2001off}, truncating (diminishing) off-policy actions \citep{watkins1989learning, precup2000elig}, a combination of the two \citep{de2018multi}, or simply ignoring any correction.

\vspace{-5pt}
\paragraph{Afterstates}
The idea of dissecting transitions into a two-step process dates at least back to
\cite{sutton1998introduction}, where afterstates (equivalent to nature's states in Figure~\ref{fig:process}) were introduced.
It was shown that learning the values of afterstates can be easier in some problems.
Similar ideas also appeared in the treatment of random events in extensive-form games, where they are sometimes referred to as "move by nature" \citep{fudenberg1991game}.

\vspace{-5pt}
\paragraph{Luck}
\cite{mesnard2021counterfactual} proposed to use future-conditional value functions to capture the effect of luck, and demonstrated that these functions can be used as baselines in policy gradient methods to reduce variance.
In this work, we approached this problem from a causal effect perspective and provided a quantitative definition of luck (see Equation~\ref{eqn:intro_b_rearr}).

\section{Discussion}
In the present work, we demonstrated how DAE can be extended to off-policy settings.
We also relate Off-policy DAE to previous methods to better understand how it can speed up learning.
Through experiments in both stochastic and deterministic environments, we verified that the proposed off-policy correction is beneficial for policy optimization.

One limitation of the proposed method lies in enforcing the $\hat{B}$ constraint in stochastic environments.
In the present work, this was approximated using CVAEs, which introduced computational overhead and additional hyperparameters.
One way to reduce computational overhead and scale to high dimensional domains is to learn a value equivalent model \citep{antonoglou2021planning, grimm2020value}.
We will leave it as future work to explore more efficient ways to enforce the constraint.


\section*{Acknowledgments}
HRP would like to thank Nico G\"urtler for the constructive feedback.
The authors thank the International Max Planck Research School for Intelligent Systems (IMPRS-IS)
for supporting Hsiao-Ru Pan.

\bibliography{ref}
\bibliographystyle{iclr2024_conference}

\clearpage
\appendix
\clearpage
\section{Proof of Theorem 1}\label{app:proof}
\begin{theorem*}[Off-policy DAE]
Given a behavior policy $\mu$ and a target policy $\pi$ and backup length $n\geq 0$.
Let $\hat{A}_{t}=\hat{A}(s_{t}, a_{t})$, $\hat{B}_{t}=\hat{B}(s_{t}, a_{t}, s_{t+1})$, and the constrained squared error
\begin{align}
\begin{aligned}
    L(\hat{A}, \hat{B}, \hat{V}) &= \expect_\mu \left[\left(\sum_{t= 0}^n\gamma^{t}\left(r_{t}-\hat{A}_{t}-\gamma \hat{B}_{t}\right)+\gamma^{n+1}\hat{V}(s_{n+1})-\hat{V}(s_0)\right)^2\right]\\
    &\text{subject to}
    \begin{cases}
        \sum_{a\in\actionspace}\hat{A}(s,a)\pi(a|s)=0\quad\forall s\in\statespace \\
        \sum_{s'\in\statespace}\hat{B}(s,a, s')p(s'|s, a)=0\quad \forall (s,a)\in \statespace\times\actionspace
    \end{cases},
\end{aligned}
\end{align}
then $(A^\pi, B^\pi, V^\pi)$ is a minimizer of the above problem. Furthermore, the minimizer is unique if $\mu$ is sufficiently explorative (i.e., non-zero probability of reaching all possible transitions $(s,a,s')$).
\end{theorem*}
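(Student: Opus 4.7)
The plan is to prove existence by a pointwise telescoping identity that makes the squared residual inside $L$ vanish along every sample trajectory, and then to prove uniqueness by peeling off the three unknowns one at a time using conditional expectations under $\pi$ combined with the two centering constraints.

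\textbf{Existence.} First I would verify the constraints at $(A^\pi, B^\pi, V^\pi)$: $\sum_a A^\pi(s,a)\pi(a|s) = V^\pi(s) - V^\pi(s) = 0$ and $\sum_{s'} B^\pi(s,a,s')p(s'|s,a) = 0$ by the definition of $B^\pi$. The heart of the existence part is the pointwise identity
\[
    A^\pi(s_t,a_t) + \gamma B^\pi(s_t,a_t,s_{t+1}) = r_t + \gamma V^\pi(s_{t+1}) - V^\pi(s_t),
\]
which follows immediately from substituting the definitions of $A^\pi$, $Q^\pi$, and $B^\pi$: the $\mathbb{E}_{s'\sim p(\cdot|s_t,a_t)}[V^\pi(s')]$ terms cancel exactly. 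Multiplying by $\gamma^t$, summing, and telescoping the $V^\pi$ terms yields
\[
    \sum_{t=0}^{n}\gamma^t\bigl(r_t - A^\pi_t - \gamma B^\pi_t\bigr) + \gamma^{n+1}V^\pi(s_{n+1}) - V^\pi(s_0) = 0
\]
along \emph{every} trajectory. Hence the squared integrand vanishes identically and $L(A^\pi, B^\pi, V^\pi) = 0$; since $L \geq 0$, this is a global minimizer.

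\textbf{Uniqueness.} Suppose $(A', B', V')$ also attains $L=0$. Writing $\Delta A = A' - A^\pi$, $\Delta B = B' - B^\pi$, $\Delta V = V' - V^\pi$, the differences still satisfy the (linear) centering constraints. Subtracting the Existence identity from the corresponding integrand at $(A', B', V')$ gives
\[
    \Delta V(s_0) + \sum_{t=0}^{n}\gamma^t\bigl(\Delta A_t + \gamma \Delta B_t\bigr) - \gamma^{n+1}\Delta V(s_{n+1}) = 0
\]
for every trajectory in $\mathrm{supp}(\mu)$. Because $\mu$ is sufficiently explorative, every trajectory of positive probability under the target policy is in $\mathrm{supp}(\mu)$, so I may take $\mathbb{E}_\pi[\,\cdot\mid s_0\,]$ on both sides. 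Iterated conditioning together with the constraints forces $\mathbb{E}_\pi[\Delta A_t\mid s_0] = 0$ and $\mathbb{E}_\pi[\Delta B_t \mid s_0] = 0$ for every $t$ (condition on $s_t$, respectively on $(s_t,a_t)$, and apply the centering property), leaving the fixed-point relation $\Delta V(s_0) = \gamma^{n+1}\mathbb{E}_\pi[\Delta V(s_{n+1})\mid s_0]$. Evaluating at the state that maximizes $|\Delta V|$ on the finite state space and using $\gamma^{n+1} < 1$ forces $\Delta V \equiv 0$.

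\textbf{Removing $\Delta A$ and $\Delta B$.} With $\Delta V \equiv 0$, the displayed identity reduces to $\sum_{t=0}^{n}\gamma^t(\Delta A_t + \gamma \Delta B_t) = 0$. Conditioning this on the first transition $(s_0,a_0,s_1)$ under $\pi$, the terms with $t \geq 1$ vanish by the constraints and the Markov property, yielding $\Delta A(s_0,a_0) + \gamma\Delta B(s_0,a_0,s_1) = 0$ for every feasible $(s_0,a_0,s_1)$. Averaging over $s_1 \sim p(\cdot\mid s_0,a_0)$ and using the $\hat{B}$ constraint kills the second term, so $\Delta A \equiv 0$ and consequently $\Delta B \equiv 0$. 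I expect the main technical friction to be the uniqueness step that converts pointwise equality on $\mathrm{supp}(\mu)$ into the clean contraction $\Delta V(s_0) = \gamma^{n+1}\mathbb{E}_\pi[\Delta V(s_{n+1})\mid s_0]$: one must carefully invoke the tower property separately for each $\Delta A_t$ and each $\Delta B_t$ (they live on different sigma-algebras) and verify that the explorativeness hypothesis really does allow replacing the $\mu$-expectation by a $\pi$-expectation before the contraction argument can conclude.
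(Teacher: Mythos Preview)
Your proposal is correct and follows essentially the same route as the paper: a pointwise telescoping identity to make the residual vanish (existence), then peeling off $V$, $A$, $B$ in turn by taking conditional expectations under $\pi$ and invoking the two centering constraints (uniqueness). The only cosmetic differences are that you work with the differences $\Delta V,\Delta A,\Delta B$ and close the $V$-step by a direct $\gamma^{n+1}$-contraction on the finite state space, whereas the paper works with $(A',B',V')$ directly and closes the $V$-step by recognizing the multi-step Bellman equation; these are equivalent.
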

\begin{proof}
Since
\begin{align}
    0\leq L(A^\pi, B^\pi, V^\pi) = \expect_\mu \left[\left(\sum_{t= 0}^{n}\gamma^{t}(r_{t}-A^\pi_{t}-\gamma B^\pi_{t})+\gamma^{n+1}V^\pi(s_{n+1})-V^\pi(s_0)\right)^2\right] = 0,
\end{align}
and both $\sum_{a\in\actionspace} \pi(a|s)A^\pi(s,a)=0$ and $\sum_{s'\in\statespace}p(s'|s,a)B^\pi(s,a,s')=0$ constraints are satisfied, $(A^\pi, B^\pi, V^\pi)$ is a minimizer of the problem.
For uniqueness, we assume the behavior policy is sufficiently explorative such that any sequence $(s_0, a_0, r_0, ... s_{n+1})$ has non-zero probability of being visited.
Now, suppose there exists $(A',B',V')$ that also minimizes $L$, i.e., $L(A', B', V')=0$, then for any sequence $(s_0, a_0, ... s_{t+1})$, we must have
\begin{align}
    \sum_{t= 0}^{n}\gamma^{t}(r_{t}-A'_{t}-\gamma B'_{t})+\gamma^{n+1}V'(s_{n+1})-V'(s_0)=0,
\end{align}
otherwise $L(A', B', V')\neq 0$.
If we take the conditional expectation over $(a_0, s_1, a_1, ... s_{n+1})$ conditioned on $s_0$ using the target policy $\pi$, then
\begin{align}
    V'(s_0)&=\expect_\pi[\sum_{t= 0}^{n}\gamma^{t}(r_{t}-A'_{t}-\gamma B'_{t})+\gamma^{n+1}V'(s_{n+1})|s_0]\\
    &=\expect_\pi[\sum_{t= 0}^{n}\gamma^{t}r_{t}+\gamma^{n+1}V'(s_{n+1})|s_0],
\end{align}
which means that $V'$ satisfies the Bellman Equation.
Therefore $V'=V^\pi$ uniquely.
Similarly, if we take the expectation over $(s_1, a_1, ... s_{n+1})$ conditioned on $s_0, a_0$, then
\begin{align}
    A'(s_0, a_0)&=r(s_0, a_0) + \expect_\pi[\sum_{t= 1}^{n}\gamma^{t}r_{t}+\gamma^{n+1}V^\pi(s_{n+1})|s_0, a_0] - V^\pi(s_0)=A^\pi(s_0,a_0)
\end{align}
Finally, if we take the expectation over $(a_1, s_2, ..., s_{t+1})$ conditioned on $s_0, a_0, s_1$, then
\begin{align}
    \gamma B'(s_0, a_0, s_1)&=r(s_0, a_0) - A^\pi(s_0, a_0) + \expect_\pi[\sum_{t= 1}^{n}\gamma^{t'}r_{t'}+\gamma^{t+1}V^\pi(s_{t+1})|s_0, a_0, s_1] - V^\pi(s_0)\\
    &= \gamma(V^\pi(s_1) - \expect[V^\pi(s_1)|s_0, a_0]) = \gamma B^\pi(s_0, a_0, s_1).
\end{align}
Similarly, we get $(A', B', V')=(A^\pi, B^\pi, V^\pi)$ for all $(s_{t'}, a_{t'}, s_{t'+1})$ with $0\leq t'\leq n$ by repeatedly taking the conditional expectations over the sequence.
By the assumption that $\mu$ has non-zero probability of visiting any sequence, we have $(A', B', V')=(A^\pi, B^\pi, V^\pi)$ for all $(s,a,s')\in\statespace\times\actionspace\times\statespace$.
\end{proof}

Remarks:
(1) While we used the squared error as the objective function in the theorem, it can be replaced with an arbitrary metric in $\mathbb{R}$, as the proof does not rely on properties of the squared error.
(2) For uniqueness, the condition on the behavior policy $\mu$ can be relaxed if we only care about states/actions covered by the target policy $\pi$.
In that case, we only need the coverage of $\mu$ to include the coverage of $\pi$.

\section{Details of Figure~\ref{fig:classic} and Figure~\ref{fig:classic_stoch}}\label{app:classic}
In this example, we compare the sample efficiency of MC, Batch TD(0) and DAE.
We note that there are only 4 possible trajectories in this environment, depending on the starting state (1 or 2) and the action chosen at state 3 ($\mathtt{u}$ or $\mathtt{d}$).
We denote the number of trajectories starting from $i$ and choosing action $a\in\{\mathtt{u}, \mathtt{d}\}$ by $n_{i,a}$, and $n_i=n_{i,\mathtt{u}}+n_{i,\mathtt{d}}$.
The trajectories were sampled using the uniform policy, i.e., $\pi(\mathtt{u}|3)=\pi(\mathtt{d}|3)=0.5$.
In the following list, we summarize the estimates from each method.
\begin{itemize}
    \item MC: $\hat{V}(1) = \frac{n_{1, \mathtt{u}}}{n_1},\quad \hat{V}(2) = \frac{n_{2, \mathtt{u}}}{n_2}$
    \item Batch TD(0): $\hat{V}(1) = \hat{V}(2) = \hat{V}(3) = \frac{n_{1,\mathtt{u}}+n_{2, \mathtt{u}}}{n_1 + n_2}$
    \item DAE: The minimizer of \begin{align*}
    L(\hat{V}(1), \hat{V}(2), \hat{A}(3, \mathtt{u}), \hat{A}(3, \mathtt{d}))&=
    n_{1,\mathtt{u}} (1 - \hat{A}(3, \mathtt{u}) - \hat{V}(1))^2+n_{1,\mathtt{d}} (0 - \hat{A}(3, \mathtt{d}) - \hat{V}(1))^2\\
    &+n_{2,\mathtt{u}} (1 - \hat{A}(3, \mathtt{u}) - \hat{V}(2))^2+n_{2,\mathtt{d}} (0 - \hat{A}(3, \mathtt{d}) - \hat{V}(2))^2
\end{align*}
subject to $\hat{A}(3,\mathtt{u}) + \hat{A}(3,\mathtt{d})=0$ (since the sampling policy is uniform).
\end{itemize}
One can use the method of Lagrange multiplier to solve the DAE problem and arrive at the following linear equations:
\begin{equation}
    \begin{pmatrix}
        n_{1,u} - n_{1,d} & n_1 & 0 \\
        n_{2,u} - n_{2,d} & 0 & n_2 \\
        (n_{1,u} + n_{2,u}) & n_{1,u} & n_{2,u} \\
    \end{pmatrix}
    \begin{pmatrix}
        \hat{A}(3, \mathtt{u}) \\
        \hat{V}(1) \\
        \hat{V}(2)
    \end{pmatrix}=
    \begin{pmatrix}
        n_{1,u} \\
        n_{2,u} \\
        n_{1,u} + n_{2,u}
    \end{pmatrix},
\end{equation}
Note that there are only 3 equations, since $\hat{A}(3,\mathtt{u}) + \hat{A}(3,\mathtt{d})=0$.
Additionally, the solution is unique only when both $n_1>0$ and $n_2>0$, otherwise the first row or the second row of the matrix would be $0$.
For simplicity, we use the pseudoinverse to compute the solution:
\begin{equation}
    \begin{pmatrix}
        \hat{A}(3, \mathtt{u}) \\
        \hat{V}(1) \\
        \hat{V}(2)
    \end{pmatrix} =
    \begin{pmatrix}
        n_{1,u} - n_{1,d} & n_1 & 0 \\
        n_{2,u} - n_{2,d} & 0 & n_2 \\
        (n_{1,u} + n_{2,u}) & n_{1,u} & n_{2,u} \\
    \end{pmatrix}^{+}
    \begin{pmatrix}
        n_{1,u} \\
        n_{2,u} \\
        n_{1,u} + n_{2,u}
    \end{pmatrix},
\end{equation}
where $+$ denotes the pseudoinverse.
This explains why the DAE estimates in Figure~\ref{fig:classic} are slightly skewed towards 0 at the beginning.
Figure~\ref{fig:classic_stoch} can be obtained similarly by adding the $\hat{B}$ terms to the DAE loss.
One difference is that, in the case of learned transition probabilities, the $\hat{B}$ constraint was enforced based on the estimated transition probabilities instead of a fixed distribution.

\clearpage
\section{Counterexample}\label{app:counter}
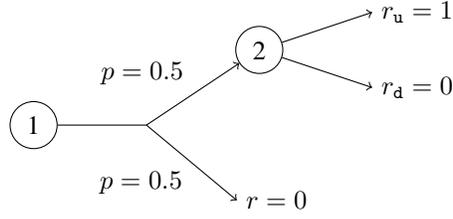
\begin{figure}
    \centering
    \begin{tikzpicture}
        \node[circle, draw] (1) at (0.0, 0.) {1};;
        \node[circle, draw] (2) at (3, 1.) {2};
        \draw[-, draw] (1) --  (1.5, 0);
        \draw[->, draw] (1.5, 0) --node[pos=.5, above left] {$p=0.5$} (2);
        \draw[->, draw] (1.5, 0) --node[pos=1, right] {$r=0$} node[pos=.5, below left] {$p=0.5$} (2.7, -1);
        \draw[->, draw] (2) --node[pos=1, right] {$r_\mathtt{u}=1$} (4.5, 1.5);
        \draw[->, draw] (2) --node[pos=1, right] {$r_\mathtt{d}=0$} (4.5, 0.5);
    \end{tikzpicture}
    \caption{A simple MDP with $\statespace=\{1, 2\}$. State 1 has only a single action, which can lead to state 2 or the terminal state with equal probabilities. State 2 has two actions $\mathtt{u}$ and $\mathtt{d}$ with rewards 1 and 0, respectively, and both actions lead to the terminal state.}
    \label{fig:counter}
\end{figure}
In this section, we construct an example to demonstrate that naively applying DAE to off-policy data can lead to biased results.
Consider the environment in Figure~\ref{fig:counter}.
Suppose the data is collected with a behavior policy $\mu$ and we wish to estimate values for a target policy $\pi$.
If we apply DAE directly to this problem without any off-policy correction, then the loss is equal to
\begin{align}
    L(\hat{A}, \hat{V})=
    \frac{1}{2}\mu(\mathtt{u}|2)\left(1 - \hat{A}(2, \mathtt{u}) - \hat{V}(1)\right)^2+
    \frac{1}{2}\mu(\mathtt{d}|2)\left(0 - \hat{A}(2, \mathtt{d}) - \hat{V}(1)\right)^2+
    \frac{1}{2}(0 - \hat{V}(1))^2,
\end{align}
since there are only three possible trajectories in this environment.
Now, if we include the constraint that $\sum_a \hat{A}(2,a)\pi(a|2)=0$, then the problem can be solved by the method of Lagrange multiplier using the following Lagrangian:
\begin{align}
    L(\hat{A}, \hat{V}) + \lambda \sum_a \hat{A}(2,a)\pi(a|2).
\end{align}
The minimizer $(A^*, V^*) = \argmin L(\hat{A},\hat{V})$ is given by:
\begin{align}
    \begin{cases}
    V^*(1) = \frac{\pi(\mathtt{u}|2)}{1 + \frac{\pi(\mathtt{u}|2)^2}{\mu(\mathtt{u}|2)} + \frac{\pi(\mathtt{d}|2)^2}{\mu(\mathtt{d}|2)} }\\
    \lambda = V^*(1) \\
    A^*(2, \mathtt{u}) = 1-V^*(1) -\frac{V^*(1)\pi(\mathtt{u}|2)}{\mu(\mathtt{u}|2)}\\
    A^*(2, \mathtt{d}) = 0-V^*(1) -\frac{V^*(1)\pi(\mathtt{d}|2)}{\mu(\mathtt{d}|2)}
    \end{cases}
\end{align}
meaning that $V^*(1)\neq V^\pi(1)=\frac{\pi(\mathtt{u}|2)}{2}$, and $A^*\neq A^\pi$ if $\pi\neq\mu$.
One can also verify that, if $\pi=\mu$ (on-policy), then $(V^*, A^*)=(V^\pi,A^\pi)$.

\clearpage
\section{Details of the MinAtar Experiments}\label{app:minatar_detail}

\subsection{Pseudocode}
The more detailed pseudocode of the proposed actor-critic method is provided in Algorithm~\ref{alg:actorcritic}.
Here we note some details about the training process.
Unlike typical methods that store 1-step transitions in the replay buffer, our buffer consists of $n$-step trajectories.
When computing the critic loss, we also compute the loss for each sub-trajectory as in \citet{pan2022direct}.
For example, if $(s_0, a_0, ..., s_n)$ is a sample trajectory, we accumulate the critic loss for all sub-trajectory $(s_i, a_i, ..., s_n)$ for all $i\in\{0, 1, 2, ..., n-1\}$.
Also, to speed up training, we use parallel actors to sample transitions from the environments.

\begin{algorithm}
\caption{A simple Off-policy Actor-Critic Algorithm}
\label{alg:actorcritic}
\begin{algorithmic}[1] 
\REQUIRE \texttt{backup} $\in$\{Uncorrected, DAE, Off-policy DAE, Tree\}
\REQUIRE $n$ (backup length)
\STATE Initialize actor-critic components $A_\theta(s,a)$, $V_\theta(s)$, $B_\theta(s,a,z)$, $\pi_\theta(a|s)$
\STATE Initialize CVAE $q_{\Tilde{\phi}}(z|s,a,s')$, $p_{\Tilde{\theta}}(z|s,a)$, $p_{\Tilde{\theta}}(s'|s,a,z)$
\STATE $\theta_\text{EMA}\leftarrow\theta$
\STATE $D=\{\}$
\STATE $D_n=\{\}$
\FOR{$t=0, 1, 2, \dots$}
\STATE Sample transition $(s, a, r, s')$ with policy $\pi_\theta$
\STATE $D_n\leftarrow D_n\cup\{(s,a,r,s')\}$
\IF{$s'$ is terminal \OR $|D_{\text{n-step}}|=n$}
    \STATE $D\leftarrow D\cup \{\mathtt{concatenate}(D_n)\}$
    \STATE $D_n\leftarrow \{\}$
\ENDIF

\IF{$t+1$ mod \texttt{steps\_per\_update} = 0}
     \STATE Sample batch of trajectories $\mathcal{B}$ from $D$
     \IF{\texttt{backup} = Off-policy DAE}
         \STATE Train $q_{\Tilde{\phi}}(z|s,a,s')$, $p_{\Tilde{\theta}}(z|s,a)$, $p_{\Tilde{\theta}}(s'|s,a,z)$ using $\mathcal{B}$ by Equation~\ref{eqn:loss_cvae}
         \STATE $B_\theta(s,a,s')\leftarrow \expect_{z\sim q_{\Tilde{\phi}}(\cdot|s,a,s')}[B_\theta(s,a,z)|s,a,s'] - \expect_{z\sim p_{\Tilde{\theta}}(\cdot|s,a)}[B_\theta(s,a,z)|s,a]$
     \ENDIF
     
    \STATE $A_\theta(s,a) \leftarrow A_\theta(s,a) - \expect_{a\sim\pi_{\theta_\text{EMA}}}[A_\theta(s,a)]$
     \STATE Compute critic loss $L_\text{critic}$ according to \texttt{backup} (Eq.~\ref{eqn:hierarchy} or Eq.~\ref{eqn:tree})
     \STATE $A_\text{normalized} \leftarrow \mathtt{stop\_gradient}(A_\theta(s,a)/\sqrt{\variance[A_\theta]})$
     \STATE Compute actor loss $L_\text{actor}=-\expect_{a\sim\pi_\theta}[A_\text{normalized}] + \beta_\text{KL}D_\text{KL}(\pi_\theta||\pi_{\theta_\text{EMA}})$ 
     \STATE Train $L_\text{critic} + L_\text{actor}$ with Adam \citep{kingma2014adam}
     \STATE $\theta_\text{EMA}\leftarrow\tau\theta_\text{EMA} + (1-\tau)\theta$
\ENDIF
\ENDFOR
\end{algorithmic}
\end{algorithm}

The $n$-step Tree backup Q-target is defined recursively by:
\begin{equation}\label{eqn:tree}
    Q_\text{target}^n(s_t,a_t) = r(s_t,a_t) + \gamma 
    \left(\sum_{a\neq a_{t+1}} \pi(a|s_{t+1})Q_\text{target}(s_{t+1}, a) + \pi(a_{t+1}|s_{t+1}) Q_\text{target}^{n-1}(s_{t+1}, a_{t+1})
    \right)
\end{equation}
where $Q_\text{target} = V_{\theta_\text{EMA}} + A_{\theta_\text{EMA}}$ in our case.

\clearpage
\subsection{Actor-Critic Network}
In our experiments, we use a convolutional neural network followed by multiple heads to approximate $A_\theta,B_\theta,V_\theta,$ and $\pi_\theta$ (see Figure~\ref{fig:ac_net})\citep{mnih2016asynchronous, wang2016dueling}.
Since we train both the actor and the critic using a single network simultaneously, to avoid interference between the two losses ($L_\text{critic}$ and $L_\text{actor}$), we simply use the representation learned from by the critic to train the actor by stopping the gradients from $L_\text{actor}$ to the shared network.
This eliminates the need to balance for the different loss functions.

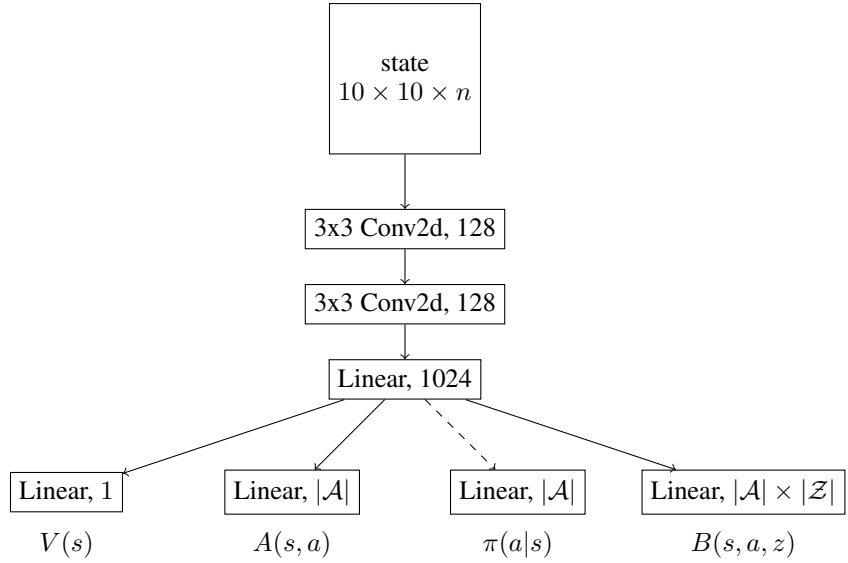
\begin{figure}
    \centering
    \begin{tikzpicture}
        \node[draw, rectangle, minimum height = 2cm, align=center] (s) at (0, 0) {state \\ $10\times10\times n$};
        \node[draw, rectangle, align=center] (c1) at (0, -2) {3x3 Conv2d, 128};
        \node[draw, rectangle, align=center] (c2) at (0, -3) {3x3 Conv2d, 128};
        \node[draw, rectangle, align=center] (l1) at (0, -4) {Linear, 1024};
        \node[draw, rectangle, align=center] (val) at (-4.5, -5.5) {Linear, $1$};
        \node[draw, rectangle, align=center] (adv) at (-1.5, -5.5) {Linear, $|\actionspace|$};
        \node[draw, rectangle, align=center] (pol) at (1.5, -5.5) {Linear, $|\actionspace|$};
        \node[draw, rectangle, align=center] (luc) at (4.5, -5.5) {Linear, $|\actionspace|\times|\latentspace|$};
        \draw[->] (s) -- (c1);
        \draw[->] (c1) -- (c2);
        \draw[->] (c2) -- (l1);
        \draw[->] (l1) -- (val);
        \draw[->] (l1) -- (adv);
        \draw[->, dashed] (l1) -- (pol);
        \draw[->] (l1) -- (luc);
        \node[align=center] at (1.5, -6.2) {$\pi(a|s)$};
        \node[align=center] at (4.5, -6.2) {$B(s,a,z)$};
        \node[align=center] at (-1.5, -6.2) {$A(s,a)$};
        \node[align=center] at (-4.5, -6.2) {$V(s)$};
    \end{tikzpicture}
    \caption{Network architecture for the actor-critic. Each hidden layer is followed by a ReLU activation function. The dashed line indicates that gradients are stopped.}
    \label{fig:ac_net}
\end{figure}

\clearpage
\subsection{Conditional Variational AutoEncoder (CVAE) Network}
We illustrate the training process and the network architecture in Figure~\ref{fig:cvae_net}.
First, we use a small discrete latent space $\latentspace$ which allows us to compute expectations over $\latentspace$ efficiently.
This also alleviates the need to use heuristics such as VQ-VAE \citep{Oord2017NeuralDR} or Gumbel-Softmax \citep{jang2016categorical, maddison2016concrete}, because we can now compute $\expect_{z\sim q_{\Tilde{\phi}}(\cdot|s,a,s')}[\log p_{\Tilde{\theta}}(s'|s,a,z)]=\sum_{z\in\latentspace} q_{\Tilde{\phi}}(z|s,a,s')\log p_{\Tilde{\theta}}(s'|s,a,z)$ exactly.
Second, to eliminate the need to balance between KL-divergence loss and the reconstruction loss, the conditional prior is trained using the representation from the encoder with gradients stopped.
This is similar to the approach in VQ-VAE where a prior is trained separately.
Finally, we observed that the posterior can sometimes collapse early in training. To mitigate this, we add a small entropy penalty for the posterior.
Combining everything together, we have the loss function for CVAE:
\begin{align*}\label{eqn:loss_cvae}
    \mathcal{L_{\text{CVAE}}}(\Tilde{\theta},\Tilde{\phi};s,a,s') =
    D_\text{KL}(q_{\Tilde{\phi}}(z|s,a,s')||p_{\Tilde{\theta}}(z|s,a)) 
    - \expect_{z\sim q_{\Tilde{\phi}}(\cdot|s,a,s')}[\log p_{\Tilde{\theta}}(s'|s,a,z)]\\
    - \beta_\text{ent}H(q_{\Tilde{\phi}}(\cdot|s,a,s'))
\end{align*}
where $H(\cdot)$ is the entropy, and $\beta_\text{ent}$ controls the strength of the entropy penalty.

\begin{figure}
    \centering
    \begin{tikzpicture}
        \node[draw, rectangle, minimum height = 2cm, align=center] (s) at (0, 0) {state $s$\\ $10\times10\times n$};
        \node[draw, rectangle, align=center] (e1) at (0, -2) {Encoder};
        \node[draw, rectangle, minimum height = 2cm, align=center] (s') at (3, 0) {state $s'$\\ $10\times10\times n$};
        \node[draw, rectangle, align=center] (e2) at (3, -2) {Encoder};
        \draw[->] (s) -- (e1);
        \draw[->] (s') -- (e2);
        \node[draw, rectangle, align=center] (a) at (-3, -3) {action $a$};
        \node[draw, rectangle, align=center] (cat1) at (0, -3) {concatenate};
        \node[draw, rectangle, align=center] (conv_act) at (0, -4) {3x3 Conv2d, 128};
        \node[draw, rectangle, align=center] (res_act) at (0, -5) {Residual Blocks $\times$2};
        \draw[->] (a) -- node[pos=.5, above] {tile} (cat1);
        \draw[->] (e1) -- (cat1);
        \draw[->] (cat1) -- (conv_act);
        \draw[->] (conv_act) -- (res_act);
        \node[draw, rectangle, align=center] (prior_net) at (-3, -6.5) {Residual Blocks $\times$2};
        \draw[->, dashed] (res_act) -- (prior_net);
        \node[draw, rectangle, align=center] (prior) at (-3, -7.7) {Linear, $|\latentspace|$};
        \draw[->] (prior_net) --node[pos=.5, left] {avg. pooling} (prior);
        \node[align=center] (_prior) at (-3, -8.4) {$p_{\Tilde{\theta}}(z|s,a)$};
        \node[draw, rectangle, align=center] (cat2) at (1.5, -6.5) {concatenate};
        \draw[->] (res_act) -- (0, -6.5) -- (cat2);
        \draw[->] (e2) -- (3, -6.5) -- (cat2);
        \node[draw, rectangle, align=center] (conv_post) at (1.5, -7.5) {3x3 Conv2d, 128};
        \node[draw, rectangle, align=center] (res_post) at (1.5, -8.5) {Residual Blocks $\times$2};
        \draw[->] (cat2) -- (conv_post);
        \draw[->] (conv_post) -- (res_post);
        \node[draw, rectangle, align=center] (posterior) at (1.5, -9.7) {Linear, $|\latentspace|$};
        \draw[->] (res_post) --node[pos=.5, right] {avg. pooling} (posterior);
        \node[align=center] (_post) at (1.5, -10.4) {$q_{\Tilde{\phi}}(z|s,a,s')$};
        \node[draw, rectangle, align=center] (d1) at (1.5, -12) {Decoder};
        \draw[->] (res_act.south)+(-0.3,0) -- (-0.3, -12) -- (d1);
        \draw[->] (_post) --node[pos=.5, right] {sample} (d1);
        \node[draw, rectangle, minimum height = 2cm, align=center] (est_s') at (1.5, -14) {state $\hat{s}'$\\ $10\times10\times n$};
        \draw[->] (d1) -- (est_s');
        \node[align=center] (kl) at (-3, -10.4) {$D_\text{KL}(q_{\Tilde{\phi}}(z|s,a,s')||p_{\Tilde{\theta}}(z|s,a)) $};
        \draw[->, dashed] (_post) -- (kl);
        \draw[->] (_prior) -- (kl);
        \node[align=center] (recon) at (4.5, -7.4) {$L_\text{recon}(s', \hat{s}')$};
        \draw[->] (s') -- (4.5, 0) -- (recon);
        \draw[->] (est_s') -- (4.5, -14) -- (recon);
        \draw[dashed]  (-1,-2.5) -- (4, -2.5) -- (4, -1.5) -- (-1, -1.5)--cycle;
        \node[draw, rectangle, align=center] (conv1_e) at (7, -2) {3x3 Conv2d, 64};
        \node[draw, rectangle, align=center] (res1_e) at (7, -3) {Residual Block};
        \node[draw, rectangle, align=center] (conv2_e) at (7, -4) {3x3 Conv2d, 128};
        \node[draw, rectangle, align=center] (res2_e) at (7, -5) {Residual Block};
        \draw[->] (conv1_e) -- (res1_e);
        \draw[->] (res1_e) -- (conv2_e);
        \draw[->] (conv2_e) -- (res2_e);
        \draw[->] (7, -1) -- (conv1_e);
        \draw[->] (res2_e) -- (7, -6);
        \draw[dashed]  (5.5,-5.5) -- (8.5, -5.5) -- (8.5, -1.5) -- (5.5, -1.5)--cycle;
        \draw[->, dashed] (4, -2) -- (5.5, -2);
        \node[draw, rectangle, align=center] (b_bn1) at (7, -9) {Batch Norm};
        \node[draw, rectangle, align=center] (b_c1) at (7, -10) {3x3 Conv2d};
        \node[draw, rectangle, align=center] (b_bn2) at (7, -11) {Batch Norm};
        \node[draw, rectangle, align=center] (b_c2) at (7, -12) {3x3 Conv2d};
        \node[draw, circle, align=center] (add) at (7, -13) {$+$};
        \draw[->] (7, -8) -- (b_bn1);
        \draw[->] (b_bn1) --node[pos=.5, right] {SiLU} (b_c1);
        \draw[->] (b_c1) -- (b_bn2);
        \draw[->] (b_bn2) --node[pos=.5, right] {SiLU} (b_c2);
        \draw[->] (b_c2) -- (add);
        \draw[->] (7, -8.5) -- (8.3, -8.5) -- (8.3, -13) -- (add);
        \draw[->] (add) -- (7, -14);
        \draw[dashed]  (5.5,-13.5) -- (8.5, -13.5) -- (8.5, -8.3) -- (5.5, -8.3)--cycle;
        \node[align=center] at (7.7, -7.5) {Residual\\ Block};
    \end{tikzpicture}
    \caption{Network architecture and the training graph for the CVAE.
    Dashed lines indicate that gradients are stopped.
    The Decoder uses the same architecture as the encoder with orders reversed and convolutions replaced with transposed convolutions.
    Softmax is applied to both $q_{\Tilde{\phi}}(z|s,a,s')$ and $p_{\Tilde{\theta}}(z|s,a))$ to ensure they are probability distributions over $\latentspace$ (note the $Z$ is a discrete space in this case).
    We use binary cross entropy for the reconstruction loss, since states in the MinAtar environments are binary images.
    }
    \label{fig:cvae_net}
\end{figure}
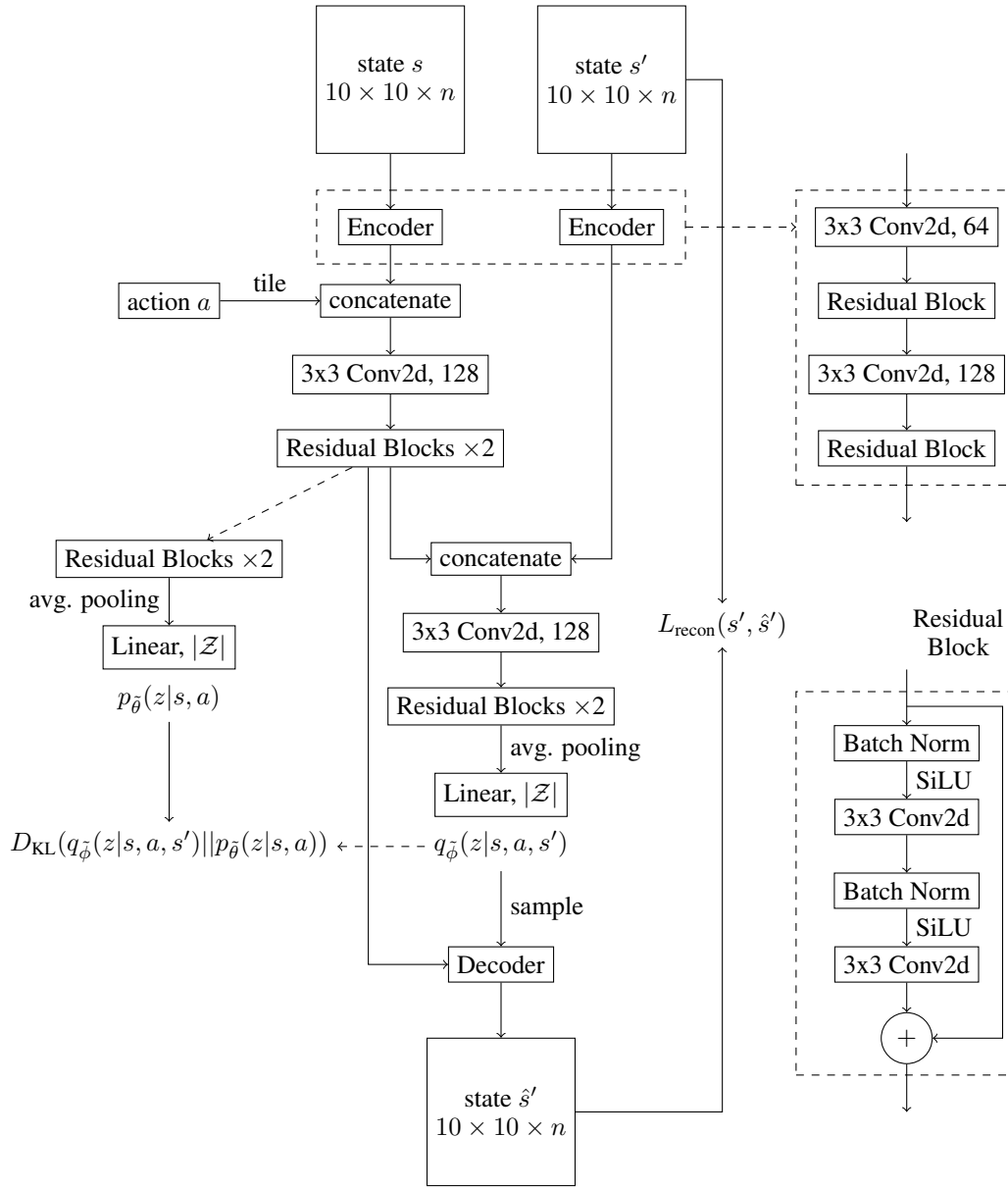

\subsection{Hyperparameters}
In Table~\ref{tab:hyp}, we summarize the hyperparameters used in the MinAtar experiments.
In general, the agent was designed to have very few hyperparameters to reduce potential confounding when comparing different backup methods.
Our preliminary experiments found that the effects of the hyperparameters to be agnostic to backup methods, except for $\tau$ which tend to have a larger impact on Off-policy DAE and DAE, which is likely due to the heavy dependence on the policy when training with DAE-like loss functions.

For CVAE training, we found the Adam's suggested $\beta=(0.9, 0.999)$ can sometimes lead to divergence, and lowering it to $(0.5, 0.9)$ can greatly improve stability.

\begin{table}[h]
    \centering
    \caption{List of hyperparameters. Note that for Off-policy DAE, there are two separate optimizers used for actor-critic and CVAE training. $^\dagger$: not used in \citet{pan2022direct}.}
    \begin{tabular}{ccc}
        \toprule
        Group & Parameter & Value \\ \midrule
        \multirowcell{3}{Environment setting} & Sticky Action & False \\
        & Difficulty Ramping & False \\ 
        & Maximum Episode Length$^\dagger$ & 108000 frames \\ \midrule
        \multirowcell{13}{Shared \\ (actor-critic training)} & Discount $\gamma$ & 0.99 \\
        & Parallel actors & 128\\
        & Initial steps before training & 25000 frames \\
        & Replay Buffer Size & 1000000 frames \\
        & Backup Length & 8/16/32\\
        & Optimizer & Adam\citep{kingma2014adam} \\
        & Learning rate & 0.00025 (linearly annealed to 0)\\
        & Adam $\beta$ & (0.9, 0.999) \\
        & Adam $\epsilon$ & $10^{-4}$ \\
        & Env. steps per update &  32 \\
        & Batch Size & 1024 frames \\
        & $\beta_\text{KL}$ & 3.0 \\ 
        & $\tau$ (EMA weight) & 0.999 \\ \midrule
        \multirowcell{6}{Off-policy DAE only \\(CVAE training)}& Latent size $|\latentspace|$ & 16 \\
        & Optimizer & Adam \\
        & Learning rate & 0.00025 \\
        & Adam $\beta$ & (0.5, 0.9) \\
        & Adam $\epsilon$ & $10^{-8}$ \\
        & $\beta_\text{ent}$ & 0.0001 \\
        \bottomrule
    \end{tabular}
    \label{tab:hyp}
\end{table}

\clearpage
\subsection{More Results}
In the MinAtar experiments, normalized score were calculated by $\text{score}_\text{norm} = \frac{\text{score}- \text{baseline}}{\text{baseline}}$, where we use the scores reported by \citet{pan2022direct} as baselines.
We note that in the MinAtar suite, Breakout\footnote{The initial states in Breakout are random, but the transitions are stochastic.} and Space Invaders are deterministic environments, while Asterix, Freeway, and Seaquest are stochastic environments.
Interestingly, we found that Off-policy DAE tends to slightly outperform DAE on Space Invaders (see Figure~\ref{fig:score_dist_env}).
This is likely because Space Invaders is partially observable, and modeling the transitions as stochastic can be helpful. 

\begin{figure}
    \centering
    \includegraphics[width=.325\linewidth]{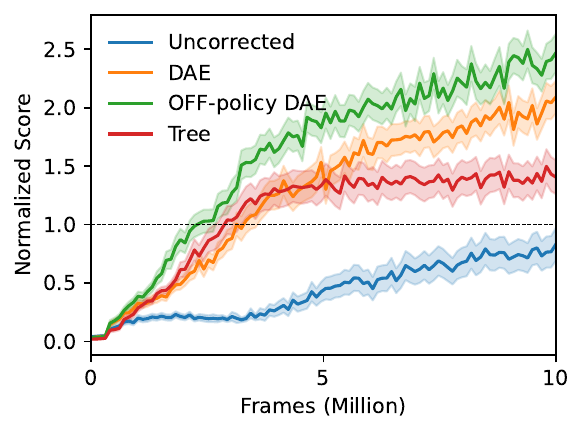}
    \includegraphics[width=.325\linewidth]{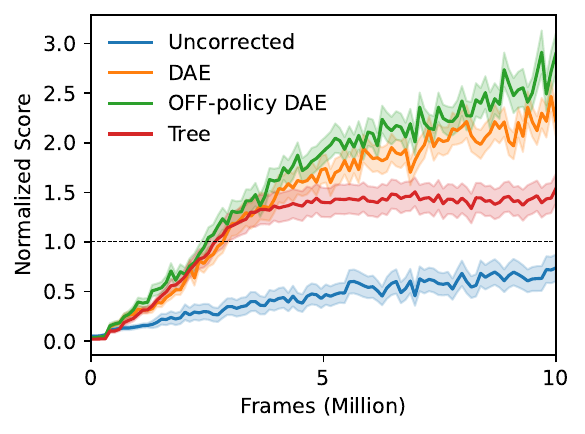}
    \includegraphics[width=.325\linewidth]{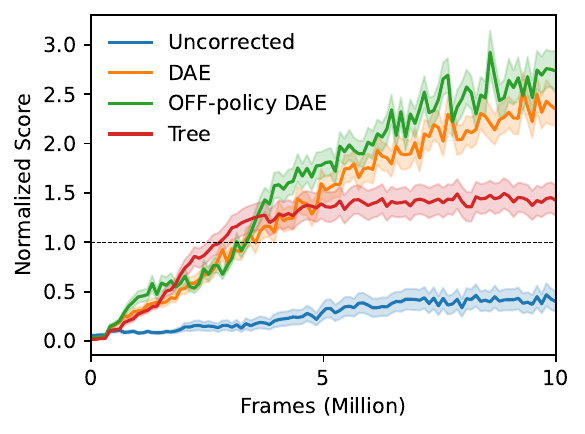}
    \caption{Normalized training curves for each backup method and backup length (from left to right: 8, 16, 32). The dashed horizontal line represents the PPO-DAE baseline.}
    \label{fig:norm_nstep}
\end{figure}

\begin{figure}
    \centering
    \includegraphics[width=.995\linewidth]{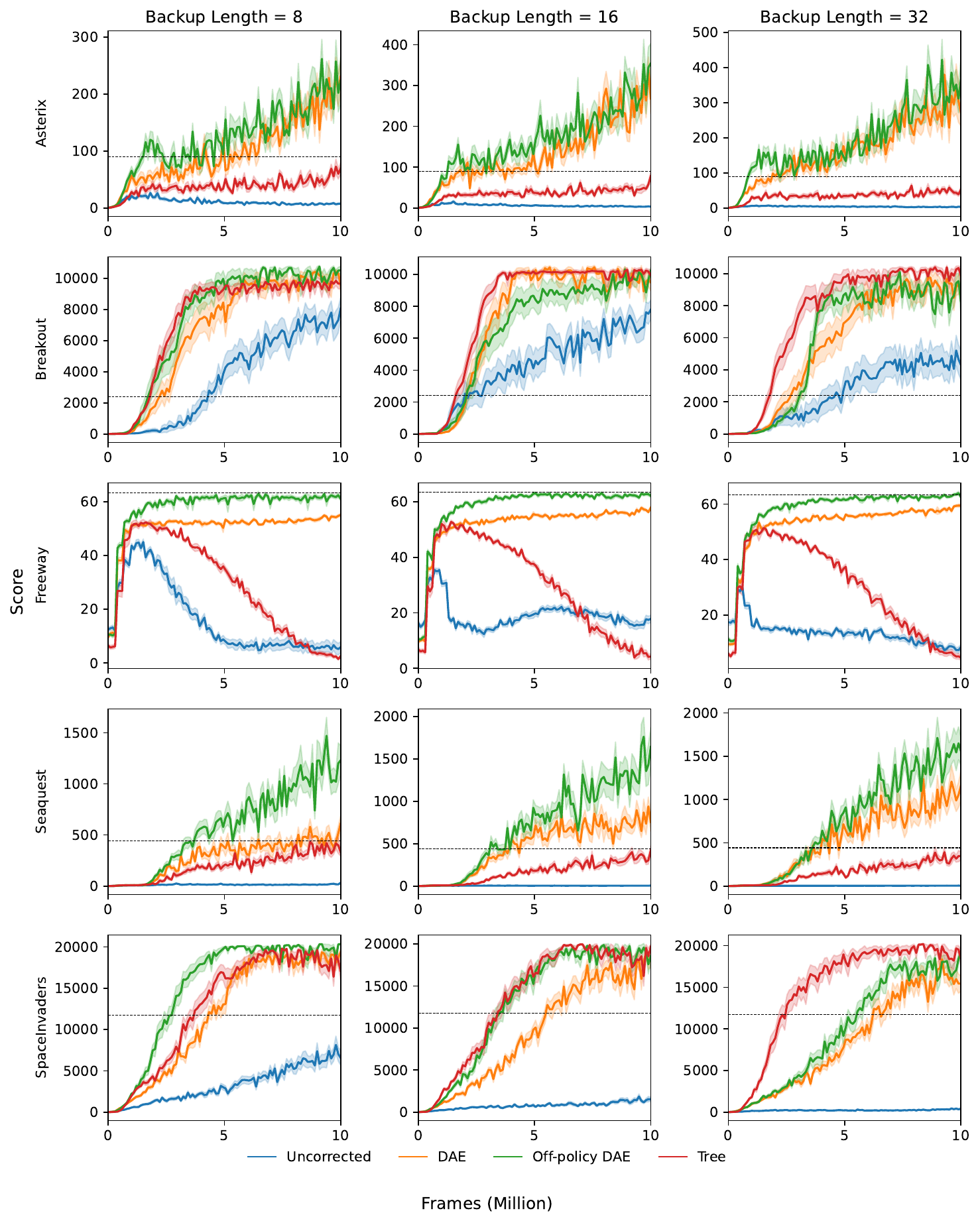}
    \caption{Training curves of each environment.}
    \label{fig:train_curves_env}
\end{figure}

\begin{table}[h]
    \centering
    \caption{Score comparison between different methods and backup lengths. Results were aggregated over 20 random seeds. Numbers represent (mean)$\pm$(1 standard error).}
    \small
    \begin{tabular}{cccccc}
    \toprule
    \multirowcell{2}{Environment} & \multirowcell{2}{N} & \multicolumn{4}{c}{Backup Method} \\
    & & Uncorrected & DAE & Off-policy DAE & Tree  \\ \midrule
\multirowcell{3}{ Asterix } & 8 & $4.5 \pm 0.2$ & $155.6 \pm 5.4$ & $161.5 \pm 5.9$ & $44.3 \pm 1.1$ \\
& 16 & $2.5 \pm 0.1$ & $194.2 \pm 4.8$ & $207.0 \pm 9.2$ & $39.0 \pm 1.4$ \\
& 32 & $2.0 \pm 0.1$ & $215.4 \pm 8.1$ & $268.9 \pm 14.0$ & $39.0 \pm 1.3$ \\ \midrule
\multirowcell{3}{ Breakout } & 8 & $5799.9 \pm 611.3$ & $9573.0 \pm 250.1$ & $9423.3 \pm 351.9$ & $9411.4 \pm 505.5$ \\
& 16 & $5220.1 \pm 448.7$ & $8506.1 \pm 476.6$ & $8887.9 \pm 429.0$ & $10069.0 \pm 288.8$ \\
& 32 & $3179.3 \pm 661.8$ & $8119.8 \pm 617.5$ & $7372.1 \pm 582.0$ & $10139.8 \pm 338.1$ \\ \midrule
\multirowcell{3}{ Freeway } & 8 & $5.5 \pm 1.8$ & $55.1 \pm 0.1$ & $61.9 \pm 0.6$ & $2.2 \pm 0.4$ \\
& 16 & $16.8 \pm 1.3$ & $57.6 \pm 0.1$ & $62.3 \pm 0.3$ & $4.1 \pm 1.0$ \\
& 32 & $8.2 \pm 0.8$ & $58.8 \pm 0.1$ & $62.9 \pm 0.3$ & $5.1 \pm 0.8$ \\ \midrule
\multirowcell{3}{ Seaquest } & 8 & $13.5 \pm 1.5$ & $413.5 \pm 25.1$ & $839.4 \pm 48.3$ & $312.3 \pm 16.4$ \\
& 16 & $4.1 \pm 0.1$ & $594.3 \pm 36.3$ & $1171.6 \pm 66.6$ & $286.4 \pm 11.6$ \\
& 32 & $4.0 \pm 0.1$ & $821.6 \pm 52.5$ & $1225.3 \pm 63.7$ & $266.2 \pm 19.1$ \\ \midrule
\multirowcell{3}{ SpaceInvaders } & 8 & $5561.5 \pm 452.3$ & $15116.1 \pm 377.4$ & $18086.9 \pm 419.1$ & $15615.3 \pm 563.0$ \\
& 16 & $1180.3 \pm 145.0$ & $13478.7 \pm 391.2$ & $16756.8 \pm 460.4$ & $16009.6 \pm 508.1$ \\
& 32 & $307.7 \pm 32.0$ & $12560.4 \pm 441.4$ & $14970.3 \pm 348.9$ & $17374.2 \pm 584.9$ \\
    \bottomrule
    \end{tabular}
    \label{tab:score_detail}
\end{table}

\begin{figure}
    \centering
    \includegraphics[width=.9\linewidth]{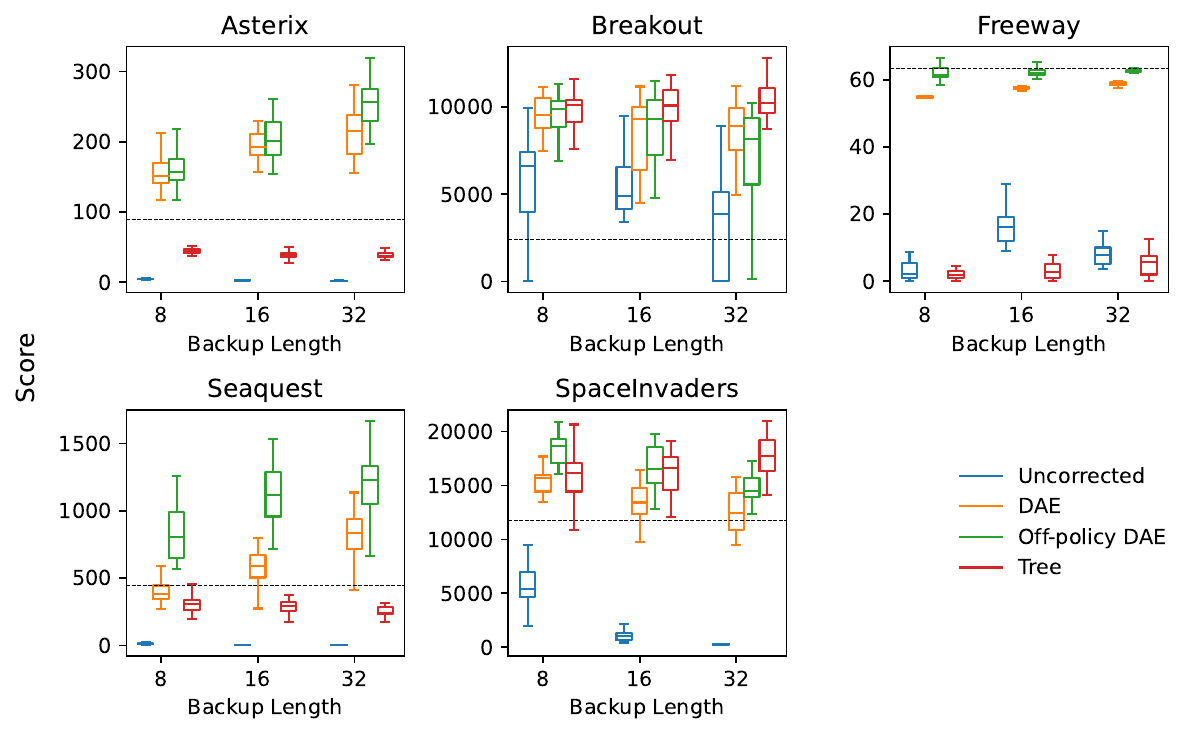}
    \caption{Distributions of the scores obtained by each method under different backup lengths.}
    \label{fig:score_dist_env}
\end{figure}

\subsection{With and Without Target Network}
While Equation~\ref{eqn:loss_full} suggests that a separate target network for critic training is not necessary, in practice, we have found that using a target network leads to better performance.
See Figure~\ref{fig:target_comparison} for results regarding the effect of target networks.
In general, we found the critic loss to be lower when using target networks.
One possible explanation is that using target networks results in biased, but lower variance estimates, which in turn makes the loss easier to optimize.
Further investigation is required to understand the tradeoffs.

\begin{figure}
    \centering
    \includegraphics[width=.95\linewidth]{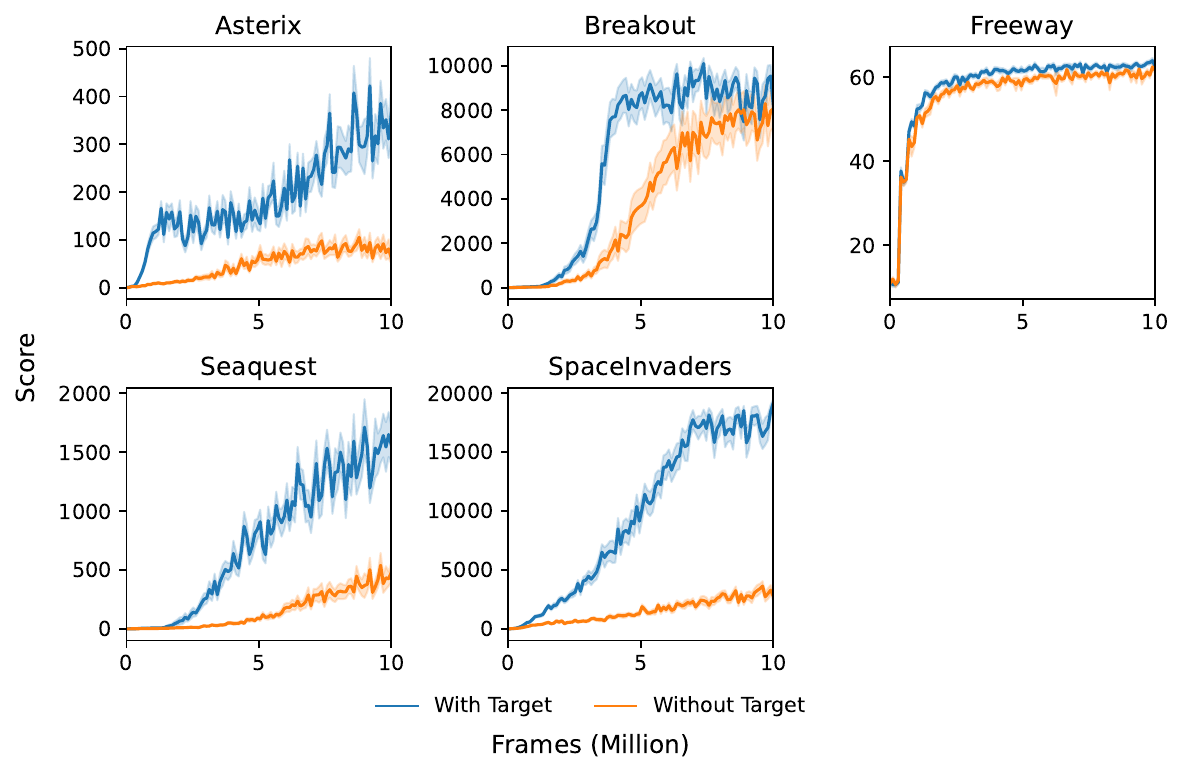}
    \caption{Off-policy DAE with and without target networks.}
    \label{fig:target_comparison}
\end{figure}

\subsection{Computational Resources}
All experiments were performed on an internal cluster of NVIDIA A100 GPUs.
Training an agent takes approximately 2 hours, depending on the backup method and the environment.
For Off-policy DAE, the training time is significantly longer (approx. 15 hours) due to CVAE training. 
The increase in training time is largely due to the use of a large residual network for the CVAE, which we found to be easier to optimize compared to smaller convolutional networks.

\end{document}